\theoremstyle{definition}
\newtheorem{theorem}{Theorem}
\newtheorem{lemma}{Lemma}
\newtheorem{proposition}{Proposition}
\newcommand{\E}{\mathcal{E}}
\newcommand{\GG}{\mathcal{G}}
\newcommand{\T}{\mathcal{T}}
\newcommand{\D}{\mathcal{D}}
\newcommand{\X}{\mathcal{X}}
\newcommand{\Y}{\mathcal{Y}}
\newcommand{\V}{\mathcal{V}}
\newcommand{\R}{\mathbb{R}}
\newcommand{\1}{\mathbf{1}}
\newcommand{\bu}{\boldsymbol{u}}
\newcommand{\bv}{\boldsymbol{v}}
\newcommand{\bt}{\boldsymbol{t}}
\newcommand{\bw}{\boldsymbol{w}}
\newcommand{\tk}{\tilde{k}}
\newcommand{\softmin}{\mathit{softmin}}
\newcommand{\softmax}{\mathit{softmax}}
\newcommand{\EMD}{\text{EMD}}
\def\BibTeX{{\rm B\kern-.05em{\sc i\kern-.025em b}\kern-.08emT\kern-.1667em\lower.7ex\hbox{E}\kern-.125emX}}
\begin{document}

%
\title[Scalable Global Alignment Graph Kernel Using Random Features]{Scalable Global Alignment Graph Kernel Using Random Features: From Node Embedding to Graph Embedding}
%


\author{Lingfei Wu$^\star$, Ian En-Hsu Yen$^\ast$, Zhen Zhang$^\dagger$, Kun Xu$^\star$, Liang Zhao$^+$, Xi Peng$^\circ$, Yinglong Xia$^\bullet$, Charu Aggarwal$^\star$ }\authornote{Corresponding author: Lingfei Wu. Email: wuli@us.ibm.com}
\affiliation{\institution{IBM Research$^\star$, CMU$^\ast$, WUSTL$^\dagger$, GMU$^+$, UDEL$^\circ$, Huawei$^\bullet$}}









\renewcommand{\shortauthors}{Lingfei Wu, Ian En-Hsu Yen, Zhen Zhang, et al.}
\fancyhead{}
%
\begin{abstract}
 Graph kernels are widely used for measuring the similarity between graphs.
 Many existing graph kernels, which focus on local patterns within graphs rather than their global properties, suffer from significant structure information loss when representing graphs. Some recent global graph kernels, which utilizes the alignment of geometric node embeddings of graphs,  yield state-of-the-art performance. However, these graph kernels are not necessarily positive-definite. More importantly, computing the graph kernel matrix will have at least quadratic {time} complexity  in terms of the number and the size of the graphs. 
 In this paper, we propose a new family of global alignment graph kernels, which take into account the global properties of graphs by using geometric node embeddings and an associated node transportation based on earth mover's distance. Compared to existing global kernels, the proposed kernel is positive-definite. Our graph kernel is obtained by defining a distribution over \emph{random graphs}, which can naturally yield random feature approximations. The random feature approximations lead to our graph embeddings, which is named as ``random graph embeddings" (RGE).
 In particular, RGE is shown to achieve \emph{(quasi-)linear scalability} with respect to the number and the size of the graphs. 
 The experimental results on nine benchmark datasets demonstrate that RGE outperforms or matches twelve state-of-the-art graph classification algorithms.

 \end{abstract}

%
%
\begin{CCSXML}
<ccs2012>
<concept>
<concept_id>10010147.10010257.10010293.10010075</concept_id>
<concept_desc>Computing methodologies~Kernel methods</concept_desc>
<concept_significance>500</concept_significance>
</concept>
</ccs2012>
\end{CCSXML}

\ccsdesc[500]{Computing methodologies~Kernel methods}

\keywords{Graph Kernel, Graph Representation Learning, Graph Embedding, Global Alignment, Random Features}

%
\copyrightyear{2019} 
\acmYear{2019} 
\setcopyright{acmlicensed}
\acmConference[KDD '19]{The 25th ACM SIGKDD Conference on Knowledge Discovery and Data Mining}{August 4--8, 2019}{Anchorage, AK, USA}
\acmBooktitle{The 25th ACM SIGKDD Conference on Knowledge Discovery and Data Mining (KDD '19), August 4--8, 2019, Anchorage, AK, USA}
\acmPrice{15.00}
\acmDOI{10.1145/3292500.3330918}
\acmISBN{978-1-4503-6201-6/19/08}
\maketitle

\section{Introduction}
Graph kernels are one of the most important methods for graph data analysis and have been successfully applied in diverse fields such as 
disease and brain analysis \cite{mokhtari2013decoding,chen2017revisiting}, 
chemical analysis \cite{ralaivola2005graph}, 
image action recognition and scene modeling \cite{wang2013directed,fisher2011characterizing}, 
and malware analysis\cite{wagner2009malware}.
Since there are no explicit features in graphs, a kernel function corresponding to a high-dimensional feature space provides a flexible way to represent each graph and to compute similarities between them. Hence, much effort has been devoted to designing feature spaces or kernel functions for capturing similarities between structural properties of graphs.

The first line of research focuses on local patterns within graphs \cite{gartner2003graph,shervashidze2009fast}. Specifically, these kernels recursively decompose the graphs into small sub-structures, and then define a feature map over these sub-structures for the resulting graph kernel. Conceptually, these notable graph kernels can be viewed as instances of a general kernel-learning framework called R-convolution for discrete objects \cite{haussler1999convolution,shervashidze2011weisfeiler}.  However, the aforementioned approaches consider only local patterns rather than global properties, which may substantially limit effectiveness in some applications, depending on the underlying structure of graphs. Equally importantly, most of these graph kernels scale poorly to large graphs due to their at least quadratic time complexity in terms of the number of graphs and cubic time complexity in terms of the size of graphs. 

Another family of methods use geometric embeddings of graph nodes to capture global properties, which has shown great promise, achieving state-of-the-art performance in graph classification \cite{johansson2014global,johansson2015learning,nikolentzos2017matching}. However, these  global graph kernels based on matching node embeddings between graphs may suffer from the loss of positive definiteness. Furthermore, the majority of these approaches have at least quadratic complexity in terms of either the number of graph samples or the size of the graph. 

To address these limitations of existing graph kernels, we propose a new family of global graph kernels that take into account the global properties of graphs, based on recent advances in the distance kernel learning framework \cite{wu2018d2ke}. The proposed kernels are truly \emph{positive-definite (p.d.)} kernels constructed from a random feature map given by a transportation distance 
between a set of geometric node embeddings of raw graphs and those of random graphs sampled from a distribution. In particular, we make full use of the well-known \emph{Earth Mover's Distance (EMD)}, computing the minimum cost of transporting a set of node embeddings of raw graphs to the ones of random graphs. To yield an efficient computation of the kernel, we derive a \emph{Random Features (RF)} approximation using a limited number of random graphs drawn from either data-independent or data-dependent distributions. The methods used to generate high-quality random graphs have a significant impact on graph learning. We propose two different sampling strategies depending on whether we use node label information or not. 
Furthermore, we note that each building block in this paper - geometric node embeddings and EMD - can be replaced by other node embeddings methods \cite{johansson2015learning,zhang2018retgk} and transportation distances \cite{solomon2016continuous}. Our code is available at {\small \url{https://github.com/IBM/RandomGraphEmbeddings}}. 

We highlight the main contributions as follows: 
\begin{itemize}
    \item We propose a class of p.d. global alignment graph kernels based on their global properties derived from geometric node embeddings and the corresponding node transportation.
    \item We present \emph{Random Graph Embeddings (RGE)}, a by-product of the RF approximation, which yields an expressive graph embedding. Based on this graph embedding, we significantly reduce computational complexity at least \emph{from quadratic to (quasi-)linear} in both the number and the size of the graphs.
    \item We theoretically show the uniform convergence of RGE. We prove that given $\Omega(1/\epsilon^2)$ random graphs, the inner product of RGE can uniformly approximates the corresponding exact graph kernel within $\epsilon-$precision, with high probability.
    \item Our experimental results on nine benchmark datasets demonstrate that RGE outperforms or matches twelve state-of-the-art graph classification algorithms including graph kernels and deep graph neural networks. In addition, we numerically show that RGE can achieve (quasi-)linear scalability with respect to both the number and the size of graphs. 
\end{itemize}

\section{Related Work}
In this section, we first make a brief survey of the existing graph kernels and then detail the difference between conventional random features method for vector inputs \cite{rahimi2008random} and our random features method for structured inputs. 

\vspace{-2mm}
\subsection{Graph Kernels}
Generally speaking, we can categorize the existing graph kernels into two groups: kernels based on local sub-structures, and kernels based on global properties. 

The first group of graph kernels compare sub-structures of graphs, following a general kernel-learning framework, i.e., R-convolution for discrete objects \cite{haussler1999convolution}. The major difference among these graph kernels is rooted in how they define and explore sub-structures to define a graph kernel, including 
random walks \cite{gartner2003graph}, 
shortest paths \cite{borgwardt2005shortest}, 
cycles \cite{horvath2004cyclic}, 
subtree patterns \cite{shervashidze2009fast}, 
and graphlets \cite{shervashidze2009efficient}. 
 A thread of research attempts to utilize node label information using the Weisfeiler-Leman (WL) test of isomorphism \cite{shervashidze2011weisfeiler} and takes structural similarity between sub-structures into account \cite{yanardag2015deep,yanardag2015structural} to further improve the performance of kernels. 

Recently, a new class of graph kernels, which focus on the use of geometric node embeddings of graph to capture global properties, are proposed. These kernels have achieved state-of-the-art performance in the graph classification task \cite{johansson2014global,johansson2015learning,nikolentzos2017matching}.  
The first global kernel was based on the Lov{\'a}sz number \cite{lovasz1979shannon} and its associated orthonormal representation \cite{johansson2014global}. However, these kernels can only be applied on unlabelled graphs. Later approaches directly learn graph embeddings by using landmarks \cite{johansson2015learning} or compute a similarity matrix \cite{nikolentzos2017matching} by exploiting different matching schemes between geometric embeddings of nodes of a pair of graphs.
Unfortunately, the resulting kernel matrix does not yield a \emph{valid p.d.} kernel and thus delivers a serious blow to hopes of using kernel support machine. Two recent graph kernels, the multiscale laplacian kernel \cite{kondor2016multiscale} and optimal assignment kernel \cite{kriege2016valid} were developed to overcome these limitations by building a p.d. kernel between node distributions or histogram intersection.

However, most of existing kernels only focus on learning kernel matrix for graphs instead of graph-level representation, which can only be used for graph classification rather than other graph related tasks (e.g., graph matching). More importantly, how to align the nodes in two graphs plays a central role in learning a similarity score. In this paper, we rely on an optimal transportation distance (e.g., Earch Mover's Distance) to learn the alignment between corresponding nodes that have similar structural roles in graphs, and directly generate a graph-level representation (embedding) for each graph instead of explicitly computing a kernel matrix.

\subsection{Random Features for Kernel Machines}
\label{subsec:random features}
Over the last decade, the most popular approaches for scaling up kernel method is arguably random features approximation and its fruitful variants \cite{rahimi2008random,sinha2016learning,bach2017equivalence,wu2018scalable}. Given a predefined kernel function, the inner product of RF directly approximates the exact kernel via sampling from a distribution, which leads to a fast linear method for computing kernel based on the learned low-dimensional feature representation. However, these RF approximation methods can only be applied to the shift-invariant kernels (e.g., the Gaussian or Laplacian kernels) with vector-form input data. Since a graph is a complex object, the developed graph kernels are neither shift-invariant kernels nor with vector-form inputs. Due to these challenges, to the best of our knowledge, there are no existing studies on how to develop the RF approximation for graph kernels. 

A recent work, called D2KE (distances to kernels and embeddings)~\cite{wu2018d2ke}, proposes the general methodology of the derivation of a positive-definite kernel through a RF map from any given distance function, which enjoys better theoretical guarantees than other distance-based methods. In \cite{wu2018random}, D2KE was extended to design a specialized time-series embedding and showed the strong empirical performance for time-series classification and clustering. We believe there is no work on applying D2KE to the graph kernel domain 
\footnote{Upon acceptance of this paper, a parallel work \cite{al2019ddgk} also adopted D2KE to develop an unsupervised neural network model for learning graph-level embedding.}. 
Our work is the first one to build effective and scalable global graph kernels using Random Features. 

\section{Geometric Embeddings of Graphs and Earth Mover's Distance}
In this section, we will introduce two important building blocks of our method, the geometric node embeddings that are used to represent a graph as a bag-of-vectors, and the well-known transportation distance EMD. 

\subsection{Geometric Embeddings of Graphs}
The following notation will be used throughout the paper. Let a graph consisting of $n$ nodes, $m$ edges, and $l$ discrete node labels be represented as a triplet $G = (V, E, \ell)$, where $V = \{ v_i \}_{i=1}^n$ is the set of vertices, $E \subseteq (V \times V)$ is the set of undirected edges, and $\ell: V \rightarrow \Sigma$ is a function that assigns the label information to nodes from an alphabet set $\Sigma$. In this paper, we will consider both unlabeled graphs and graphs with discrete node labels.
Let $\GG$ be a set of $N$ graphs where $\GG = \{G_i\}_{i=1}^N$ and let $\Y$ be a set of graph labels 
\footnote{Note that there are two types of labels involved in our paper, i.e., the node labels  and the graph labels. The node labels characterize the property of nodes. The graph labels are the classes that graph belongs to.}
corresponding to each graph in $\GG$ where $\Y = \{Y_i\}_{i=1}^N$. 
Let the geometric embeddings of a graph $G$ be a set of vectors $U = \{ \bu_i \}_{i=1}^n \in \R^{n \times d}$ for all nodes, where the vector $\bu_i$ in $U$ is the representation of the node $v_i$, and $d$ is the size of latent node embedding space. 

Typically, with different underlying learning tasks, a graph $G$ can be characterized by different forms of matrices. Without loss of generality, we use the normalized Laplacian matrix $L = D^{-1/2}(D-A)D^{-1/2} = I - D^{-1/2}AD^{-1/2}$, where $A$ is the adjacency matrix with $A_{ij} = 1$ if $(v_i, v_j) \in E$ and $A_{ij} = 0$ otherwise, and $D$ is the degree matrix. We then compute the $d$ smallest eigenvectors of $L$ to obtain $U $ as its geometric embeddings through the partial eigendecomposition of $L = U \Lambda U^T$. Then each node $v_i$ will be assigned an embedding vector $\bu_i \in \R^d$ where $\bu_i$ is the i-th row of the absolute $U$. Note that since the signs of the eigenvectors are arbitrary, we use the absolute values. Let $\bu_{ij}$  be the $j$th item of the vector $\bu_i$, then it satisfies $\vert \bu_{ij}\vert\leq 1$. Therefore, the node embedding vectors can be viewed as points in a d-dimensional unit hypercube. This fact plays an important role in our following sampling strategy. 

Note that although the standard dense eigensolvers require at least cubic time complexity in the number of graph nodes, with a state-of-the-art iterative eigensolver \cite{stathopoulos2010primme,wu2017primme_svds}, we can efficiently solve eigendecomposition with complexity that is linear in the number of graph edges.
It is also worth noting that the resulting geometric nodes embeddings well capture global properties of the graph since the eigenvectors associated with low eigenvalues of $L$ encode the information about the overall structure of $G$ based on the spectral graph theory \cite{von2007tutorial}.

In the traditional model of Natural Language Processing, a bag-of-words had been the most common way to represent a document. With modern deep learning approaches, each element such as a word in the document or a character in the string is embedded into a low-dimensional vector and is fed within a bag-of-vectors into recurrent neural networks that perform document and string classification. Similarly, we also represent each graph as bag-of-vectors using a set of geometric node embeddings. However, although there is canonical ordering for the nodes of a graph, it is not reliable in most case. Therefore, it is important to find an optimal matching between two sets of node embeddings when comparing two graphs.

\subsection{Node Transportation via EMD}
Now we assume that a graph $G$ is represented by the bag-of-vectors $\{\bu_1, \bu_2, \ldots, \bu_n\}$. To use the bag-of-words model, we also need to compute weights associated with each node vector. To be precise, if node $v_i$ has $c_i$ outgoing edges, we use $\bt_i = (c_i / \sum_{j=1}^n c_j) \in \R$ as a normalized bag-of-words (nBOW) weight for each node. Our goal is to measure the similarity between a pair of graphs $(G_i, G_j)$ using a proper distance measure. Instead of treating it as an assignment problem solved by maximum weight matching as in \cite{johansson2015learning}, we cast the task as a well-known transportation problem \cite{hitchcock1941distribution}, which can be addressed by using the Earth Mover's Distance \cite{rubner2000earth}. 

Using EMD, one can easily measure the dissimilarity between a pair of graphs $(G_x, G_y)$ through node transportation, which essentially takes into account alignments between nodes.
Let $n = max(n_x,n_y)$ denote the maximum number of nodes in a pair of graphs $(G_x, G_y)$. Since $\bt^{(G_x)}$ is the nBOW weight vector for the graph $G_x$, it is easy to obtain that $(\bt^{(G_x)})^T \1 = 1$. Similarly, we have $(\bt^{(G_y)})^T \1 = 1$. Then the EMD is defined as
\begin{equation}\label{EMD}
\begin{aligned}
& \EMD(G_x,G_y) := \min_{\T \in \R_+^{n_x \times n_y}} \langle \D, \T \rangle, \\
& \text{subject to}: \T\1=\bt^{(G_x)},\;\;\T^T\1=\bt^{(G_y)}.
\end{aligned}
\end{equation}
where $\T$ is the transportation flow matrix with $\T_{ij}$ denoting how much of node $v_i$ in $G_x$ travels to node $v_j$ in $G_y$, and $\D$ is the transportation cost matrix where each item $\D_{ij}=d(\bu_{i},\bu_{j})$ denotes the distance between two nodes measured in their embedding space. Typically, the Euclidean distance $d(\bu_{i},\bu_{j})=\|\bu_{i}-\bu_{j}\|_2$ is adopted. 
We note that with the distance $d(\bu_{i},\bu_{j})$ is a \emph{metric} in the embedding space, the EMD \eqref{EMD} also define a \emph{metric} between two graphs \cite{rubner2000earth}. An attractive attribute of the EMD is that it provides an accurate measurement of the distance between graphs with different nodes that are contextually similar but in different positions in the graph. The EMD distance has been observed to perform well on text categorization \cite{kusner2015word} and graph classification \cite{nikolentzos2017matching}. A straightforward way that defines a kernel matrix based on EMD that measures the similarity between graphs has been shown in \cite{nikolentzos2017matching} as follows:
\begin{equation} \label{eq:sim_EMD}
K = - \frac{1}{2} J D J
\end{equation}
where $J$ is the centering matrix $J = I - \frac{1}{N}\1 \1^T$ and $D$ is the EMD distance matrix from all the pairs of graphs. 
However, there are three problems. The first one is that the Kernel matrix in \eqref{eq:sim_EMD} is not necessarily positive-definite. The second problem is that the EMD is expensive to compute, since its time complexity is $O(n^3\mathrm{log}(n))$. In addition, computing the EMD for each pair of graphs requires the quadratic time complexity $O(N^2)$ in the number of graphs, which is highly undesirable for large-scale graph data. In this paper, we propose a scalable global alignment graph kernel using the random features to simultaneously address all these issues.


\section{Scalable Global Alignment Graph Kernel Using Random Features}
In this section, we first show how to construct a class of the p.d. global alignment graph kernels from an optimal transportation distance (e.g., EMD) and then present a simple yet scalable way to compute expressive graph embeddings through the RF approximation. We also show that the inner product of the resulting graph embeddings uniformly converge to the exact kernel. 


\subsection{Global Alignment Graph Kernel Using EMD and RF}
The core task is to build a \emph{positive-definite} graph kernel that can make full use of both computed geometric node embeddings for graphs and a distance measure considering the alignment of the node embeddings. We here define our global graph kernel as follows:
\begin{equation}\label{eq:GGK_EMD}
\begin{aligned}
&k(G_x,G_y):=  \int p(G_\omega) \phi_{G_\omega}(G_x)\phi_{G_\omega}(G_y) dG_\omega, \\
& \text{where} \;\; \phi_{G_\omega}(G_x):=\exp(-\gamma \EMD(G_x,G_\omega)).
\end{aligned}
\end{equation}
Here $G_\omega$ is a random graph consisting of $D$ random nodes with their associated node embeddings $W = \{ \bw_i \}_{i=1}^D$,  where each random node embedding $\bw_i$ is sampled from a d-dimensional vector space $\V \in \R^{d}$. Thus, $p(G_\omega)$ is a distribution over the space of all random graphs of variable sizes $\Omega:=\bigcup_{D=1}^{D_{max}}\V^{D}$. Then we can derive an infinite-dimensional feature map $\phi_{G_\omega}(G_x)$ from the EMD between $G_x$ and all possible random graphs $G_\omega \in \Omega$. One explanation of how our proposed kernel works is that a small random graph  can implicitly partition a larger raw graph through node transportation (or node alignments) in the corresponding node embedding space using EMD, as illustrated in Fig. \ref{fig:rge_emd_demo}. 

\begin{figure}[tb]
\centering
\includegraphics[scale = 0.38]{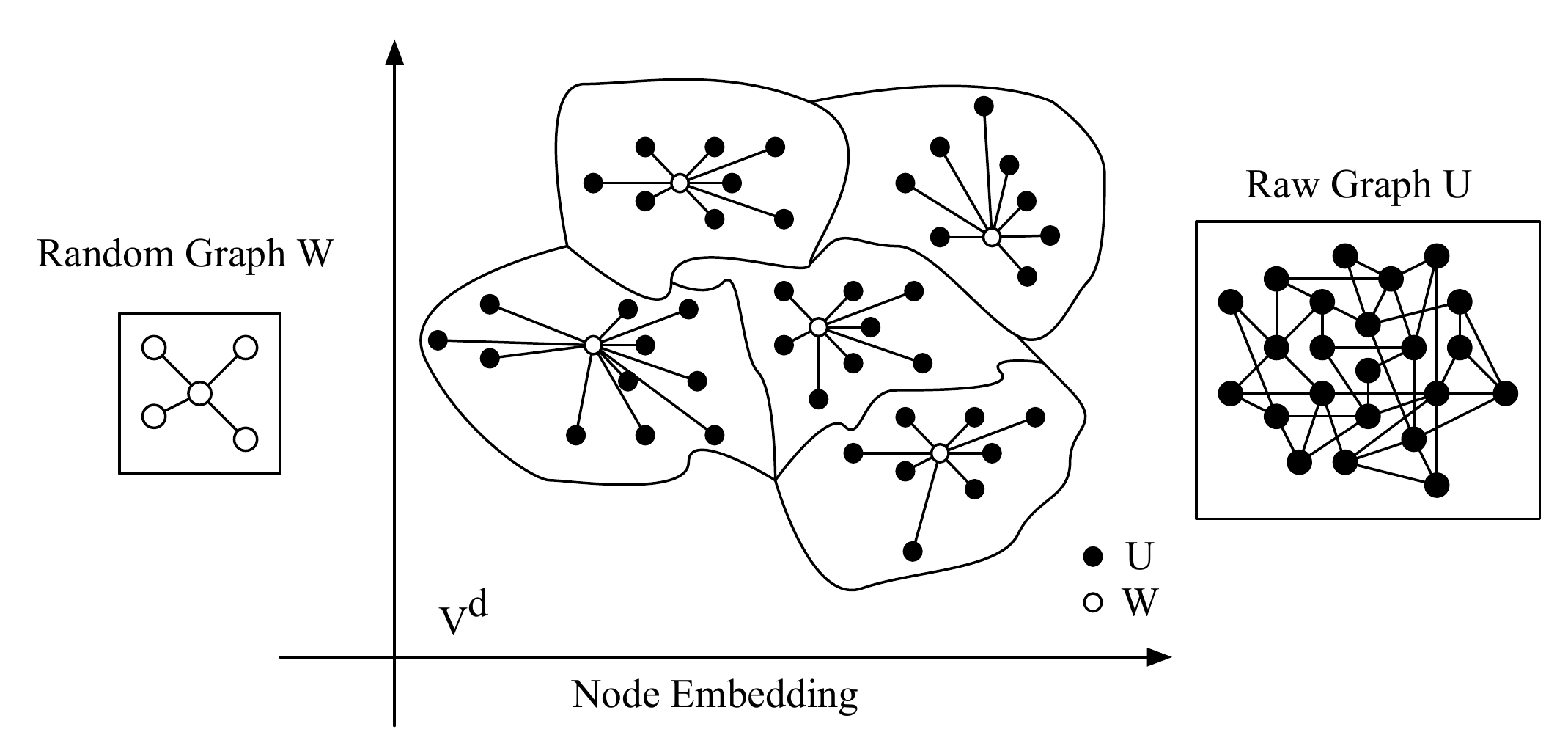}
\vspace{-2mm}
\caption{An illustration of how the EMD is used to measure the distance between a random graph and a raw graph. Each small random graph implicitly partitions the larger raw graph through node alignments in a low dimensional node embedding space.} 
\vspace{-4mm}
\label{fig:rge_emd_demo}
\end{figure}

A more formal and revealing way to interpret our kernel defined in \eqref{eq:GGK_EMD} is to express it as
\begin{equation}\label{eq:GGK_EMD2}
\begin{aligned}
& k(G_x,G_y) :=  \\
& \exp\left( -\gamma\softmin_{p(G_\omega)}\{ \EMD(G_x,G_\omega)+\EMD(G_\omega,G_y) \} \right)
\end{aligned}
\end{equation}
where,
\begin{equation}\label{softmin}
 \softmin_{p(G_\omega)}\{f(G_\omega)\}:= -\frac{1}{\gamma}\log \int p(G_\omega) e^{-\gamma f(G_\omega)} dG_\omega
\end{equation} 
can be treated as the soft minimum function defined by two parameters $p(G_\omega)$ and $\gamma$. Since the usual soft minimum is defined as $\softmin_i f_i := -\softmax_i(-f_i) = -\log \sum_{i} e^{-f_i}$, then Equation \eqref{softmin} can be regarded as its smoothed version, which uses parameter $\gamma$ to control the degree of smoothness and is reweighted by a probability density $p(G_\omega)$. Interestingly, the value of \eqref{softmin} is mostly determined by the minimum of $f(G_\omega)$, when $f(G_\omega)$ is Lipschitz-continuous and $\gamma$ is large. Since EMD is a metric as discussed above, we have
\begin{equation*}\label{eq:GGK_EMD tri_ineq}
 \EMD(G_x,G_y) \leq  
 \min_{G_\omega\in \Omega} \left(\EMD(G_x,G_\omega)+\EMD(G_\omega,G_y)\right)
\end{equation*}
by the triangle inequality. The equality holds if the maximum size of the random graph, $D_{\max}$, is equal or greater than the original graph size $n$. Therefore, the kernel value in \eqref{eq:GGK_EMD2} serves as a good approximation to the EMD between any pair of graphs $G_x$ and $G_y$. By the kernel definition, it must be \emph{positive-definite}.

\subsection{Random Graph Embedding: Random Features of Global Alignment Graph Kernel}
\label{sec:random graph embeddings}
In this section, we will introduce how to efficiently compute the proposed global alignment graph kernels and derive the random graph embedding that can be used for representing graph-level embedding from the geometric node embeddings. 
\subsubsection{Efficient Computation of RGE} \label{subsec:efficient computation of RGE}
\hfill\\
Exact computation of the proposed kernel in \eqref{eq:GGK_EMD} is often infeasible, as it does not admit a simple analytic solution. A natural way to compute such kernel is to resort to a kernel approximation that is easy to compute while uniformly converges to the exact kernel. As one of the most effective kernel approximation techniques, random features method has been demonstrated great successes in approximating Gaussian Kernel \cite{rahimi2008random,le2013fastfood} and Laplacian Kernel \cite{wu2016revisiting} in various applications. However, as we discussed before in Sec. \ref{subsec:random features}, conventional RF methods cannot be directly applied to our graph kernels since they are not shift-invariant and cannot deal with the inputs that are not vector-form. Moreover, for traditional RF methods, we have to know the kernel function prior before hand, which is also not available in our case. 
However, fortunately, since we can define our kernel in terms of a randomized feature map, it naturally yields the following random approximation that does not require aforementioned assumptions,
\begin{equation}\label{RGE}
\begin{aligned}
\tilde{k}(G_x,G_y)&=\langle Z(G_x), Z(G_y) \rangle 
= \frac{1}{R}\sum_{i=1}^R \phi_{{G_\omega}_i}(G_x)\phi_{{G_\omega}_i}(G_y)\\
&\rightarrow k(G_x,G_y),\ \mathrm{as}\ R\rightarrow\infty.
\end{aligned}
\end{equation}
where $Z(G_x)$ is a $R$-dimensional vector with the $i-$th term $Z(G_x)_i=\frac{1}{\sqrt{R}}\phi_{{G_\omega}_i}(G_x)$, and $\{{G_\omega}_i\}_{i=1}^R$ are i.i.d. samples drawn from $p(G_\omega)$. Note that the vector $Z(G_x)$ just can be considered as the representation (embedding) of graph $G_x$. We call this random approximation "\emph{random graph embedding (RGE)}", a generalized concept of "random features" for our graph inputs. We will also show that this random approximation RGE admits the uniform convergence to the original kernel \eqref{eq:GGK_EMD} over all pairs of graphs $(G_x,G_y)$.

Algorithm \ref{alg:EMD_features} summarizes the procedure to generate feature vectors for data graphs. There are several comments to make here. First of all, the distribution $p(G_\omega)$ is the key to generating high-quality node embeddings for random graphs.
We propose two different ways to generate random graphs, which we will illustrate in detail later. Second, the size $D$ of the random graphs is typically quite small. An intuitive explanation why a small random graph captures important global information of raw graphs has been discussed in the previous section. However, since there is no prior information to determine how many random nodes is needed to segment the data graph for learning discriminatory features, we sample the size of the random graphs from a uniform distribution $[1, D_{max}]$ to obtain an unbiased estimate of $D$. Finally, both node embedding and distance measures can be further improved by exploiting the latest advancements in these techniques.

\begin{algorithm}[htb]
\caption{Random Graph Embedding}
\label{alg:EMD_features}
\begin{algorithmic}[1]
    \STATEx {\bf Input:} Data graphs $\{G_i\}_{i=1}^N$, node embedding size $d$, maximum size of random graphs $D_{max}$, graph embedding size $R$.
    \STATEx {\bf Output:} Feature matrix $Z_{N \times R}$ for data graphs
    \STATE Compute nBOW weights vectors $\{\bt^{(G_i)}\}_{i=1}^N$ of the normalized Laplacian $L$ of all graphs
    \STATE Obtain node embedding vectors $\{\bu_i\}_{i=1}^n$ by computing $d$ smallest eigenvectors of $L$
    \FOR {$j = 1, \ldots, R$}
        \STATE Draw $D_j$ uniformly from $[1, D_{max}]$. 
        \STATE Generate a random graph ${G_\omega}_j$ with $D_j$ number of  nodes embeddings $W$ from Algorithm \ref{alg:RG_gen}. 
        \STATE Compute a feature vector $Z_j = \phi_{{G_\omega}_j}(\{G_i\}_{i=1}^N))$ using EMD or other optimal transportation distance in Equation \eqref{eq:GGK_EMD}.
    \ENDFOR
    \STATE Return feature matrix $Z(\{G_i\}_{i=1}^N) = \frac{1}{\sqrt{R}} \{Z_i\}_{i=1}^R$
\end{algorithmic}
\end{algorithm}
\vspace{-0mm}

By efficiently approximating the proposed global alignment graph kernel using RGE, we obtain the benefits of both improved accuracy and reduced computational complexity. Recall that the computation of EMD has time complexity $O(n^3log(n))$ and thus the existing graph kernels require at least $O(N^2 n^3log(n))$ computational complexity and $O(N^2)$ memory consumption, where $N$ and $n$ are the number of graphs and the average size of graphs, respectively. Because of the small size of random graphs, the computation of EMD in our RGE approximation only requires $O(D^2nlog(n))$ \cite{bourgeois1971extension}. It means that our RGE approximation only requires computation with the quasi-linear complexity $O(nlog(n))$ if we treat $D$ as a constant (or a small number). Note that with a state-of-the-art eigensolver \cite{stathopoulos2010primme,wu2017primme_svds}, we can effectively compute the $d$ largest eigenvectors with linear complexity $O(dmz)$, where $m$ is the number of graph edges and $z$ is the number, typically quite small, of iterations of iterative eigensolver. Therefore, the total computational complexity and memory consumption of RGE are $O(NRnlog(n) + dmz)$ and $O(NR)$ respectively. Compared to other graph kernels, our method reduces computational complexity from quadratic to linear in terms of the number of graphs, and from (quasi-)cubic to (quasi-)linear in terms of the graph size. We will empirically assess the computational runtime in the subsequent experimental section. 

\subsubsection{Data-independent and Data-dependent Distributions}
\hfill\\
Algorithm \ref{alg:RG_gen} details the two sampling strategies (data-independent and data-dependent distributions) for generating a set of node embeddings of a random graph.
The first scheme is to produce random graphs from a data-independent distribution. Traditionally, conventional RF approximation has to obtain random samples from a distribution corresponding to the user predefined kernel (e.g., Gaussian or Laplacian kernels). However, since we reverse the order by firstly defining the distribution and then defining a kernel similar to \cite{wu2018d2ke}, we are free to select any distribution that can capture the characteristics of the graph data well. Given that all node embeddings are distributed in a $d$-dimensional unit hypercube space, we first compute the largest and smallest elements in all node embeddings and then use a uniform distribution in the range of these two values to generate a set of $d$-dimensional vectors for random node embeddings in a random graph. Since node embeddings are roughly dispersed uniformly in the $d$-dimensional unit hypercube space, we found this scheme works well in most of cases. Like the traditional RF, this sampling scheme is data-independent. So we call it RGE(RF). 

Another scheme is conceptually similar to recently proposed work on deriving data-dependent traditional random features \cite{ionescu2017large} for vector-inputs, which have been shown to have a lower generalization error than data-independent random features \cite{rudi2017generalization}. However, unlike these conventional RF methods \cite{ionescu2017large,rudi2017generalization} and the conventional landmarks method that selects a representative set of whole graphs \cite{johansson2015learning}, we propose a new way to sample parts of graphs (only from training data) as random graphs, which we refer to as the \emph{Anchor Sub-Graphs (ASG)} scheme RGE(ASG). There are several potential advantages compared to lankmarks and RF methods. First of all, ASG opens the door to defining an indefinite feature space since there are conceptually unlimited numbers of sub-graphs, compared to the limited size (up to the number of graphs) of landmarks. Second, ASG produces a random graph by permuting graph nodes of the original graph and by resembling randomly their corresponding node embeddings in the node embedding space, which may help to identify more hidden global structural information instead of only considering the raw graph topology. Thanks to EMD, hidden global structure can be captured well through node alignments. Finally, unlike RGE(RF), the ASG scheme allows exploiting node-label information in raw graphs since this information is also accessible through the sampled nodes in sub-graphs.

Incorporating the node label information into RGE(ASG) is fairly straightforward; it is desirable to assign nodes with same labels a smaller distance than these with different labels. Therefore, we can simply set the distance $d(\bu_i,\bu_j) = \textit{max}(\|\bu_{i}-\bu_{j}\|_2, \sqrt{d})$ if nodes $v_i$ and $v_j$ have different node labels since \emph{$\sqrt{d}$} is the largest distance in a $d$-dimensional unit hypercube space. 

\begin{algorithm}[tb]
\caption{Random Graph Generation}
\label{alg:RG_gen}
\begin{algorithmic}[1]
    \STATEx {\bf Input:} Node embeddings $U = \{\bu_i\}_{i=1}^n$, node embedding size $d$, size of random graph $D_j$.
    \STATEx {\bf Output:} Random node embeddings $W = \{\bw_i\}_{i=1}^{D_j}$
    \IF{Choose RGE(RF)}
        \STATE Compute maximum value $u_{max}$ and minimum value $u_{min}$ in $U$. 
        \STATE Generate a number $D_j$ of random node embedding vectors $\{\bw_i\}_{i=1}^{D_j}$ in a random graph drawn from $ \left( u_{min}+(u_{max}-u_{min})\times rand(d,D_j) \right)$.
    \ELSIF{Choose RGE(ASG)}
        \STATE Uniformly draw graph index $k = rand(1, N)$ and select the $k$-th raw graph 
        \STATE Uniformly draw a number $D_j$ of node indices $\{n_1, n_2, \ldots, n_{D_j}\}$ in the $k$-th raw graph 
        \STATE Generate a number $D_j$ of random node embedding vectors $\{\bw_i\}_{i=1}^{D_j} = \{ \bu_{n_1}, \bu_{n_2}, \ldots, \bu_{n_{D_j}} \} $ as well as its associated node labels for a random graph
    \ENDIF
    \STATE Return nodes embeddings $W = \{\bw_i\}_{i=1}^{D_j}$ for a random graph
\end{algorithmic}
\vspace{-0mm}
\end{algorithm}

\subsection{Convergence of Random Graph Embedding}
In this section, we establish a bound on the number of random graphs required to guarantee an $\epsilon$ approximation between the exact kernel \eqref{eq:GGK_EMD} and its random feature approximation \eqref{RGE} denoted by $\tk(G_x,G_y)$. We first establish a covering number for the space $\X$ under the EMD metric.

\begin{lemma} \label{lemma:cover}
There is an $\epsilon$-covering $\E$ of $\X$ under the metric defined by EMD with Euclidean ground distance such that
$$
\forall G\in\X, \exists G_i\in \E,\;\EMD(G,G_i)\leq \epsilon.
$$
with $|\E| \leq (1+\frac{2}{\epsilon})^{Md}$, where $M$ is an upper bound on the number of nodes for any graph $G\in\X$.
\end{lemma}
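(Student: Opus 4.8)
The plan is to reduce the covering of the graph space $\X$ under the $\EMD$ metric to a covering of the underlying node-embedding space, and then to lift it by a product construction over the (at most $M$) nodes of a graph. The driving observation is that $\EMD$ is controlled by a per-node matching: if two graphs carry the same node weights and their corresponding node embeddings are pointwise close in Euclidean distance, then their $\EMD$ is small. Hence an $\epsilon$-net for single node embeddings induces an $\epsilon$-net for whole graphs, and the exponent $Md$ in the target bound is precisely ``$M$ nodes $\times$ a $d$-dimensional embedding each''.

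First I would build an $\epsilon$-net $C$ of the single-node embedding space. As noted in the construction of the geometric embeddings, every node vector $\bu_i$ satisfies $|\bu_{ij}|\le 1$, so it is confined to a bounded region of $\R^d$. Treating this region as a $d$-dimensional unit ball (the normalization implicit in the stated constant), the standard volumetric covering estimate yields an $\epsilon$-net $C$ with $|C|\le(1+\tfrac{2}{\epsilon})^{d}$: take a maximal $\epsilon$-separated subset, and compare the volumes of the disjoint radius-$\epsilon/2$ balls centered at its points against the enclosing radius-$(1+\epsilon/2)$ ball.

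Next I would lift $C$ to a net $\E$ of graphs. Given $G\in\X$ with node embeddings $\{\bu_i\}_{i=1}^n$ ($n\le M$) and weights $\bt$, round each $\bu_i$ to its nearest point $\hat\bu_i\in C$, producing a graph $\hat G$ carrying the same weights $\bt$. Using the diagonal transportation plan $\T=\mathrm{diag}(\bt)$, which is feasible because $\hat G$ has the same marginals as $G$, I obtain
$$
\EMD(G,\hat G)\;\le\;\langle \D,\T\rangle\;=\;\sum_{i=1}^n \bt_i\,\|\bu_i-\hat\bu_i\|_2\;\le\;\epsilon\sum_{i=1}^n \bt_i\;=\;\epsilon,
$$
since $\bt^T\1=1$. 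Counting the rounded configurations---at most $M$ nodes, each placed at one of the $\le(1+\tfrac{2}{\epsilon})^d$ net points---gives $|\E|\le\big((1+\tfrac{2}{\epsilon})^d\big)^M=(1+\tfrac{2}{\epsilon})^{Md}$, as claimed; graphs with $n<M$ are absorbed by padding with zero-weight nodes so that each configuration is an $M$-tuple over $C$.

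The step I expect to be the main obstacle is making $\E$ a genuinely finite, $G$-independent set while preserving the clean $(1+\tfrac{2}{\epsilon})^{Md}$ count. The diagonal-plan bound above silently lets the cover element $\hat G$ inherit the continuously varying weights $\bt$ of $G$, whereas a fixed net must commit to specific weights in advance; the tallied exponent $Md$ accounts only for node \emph{positions}. I would resolve this either by restricting $\X$ to a canonical weighting (e.g.\ uniform weights, so the weight profile is determined once the multiset of node positions is fixed) or by separately discretizing the weight simplex and verifying that the $\EMD$ estimate degrades by at most a further $O(\epsilon)$ under such weight rounding. I would also recheck the triangle-inequality bookkeeping so that the final per-graph error remains $\le\epsilon$ rather than a constant multiple of it.
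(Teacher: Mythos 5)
Your proof takes essentially the same route as the paper's: build an $\epsilon$-net of the unit ball of size $(1+\frac{2}{\epsilon})^{d}$ by the standard volumetric argument, lift it to graphs as collections of at most $M$ net points, and bound $\EMD(G,\hat G)\leq\epsilon$ via the diagonal transport plan that matches each node to its rounded copy, yielding the count $(1+\frac{2}{\epsilon})^{Md}$. The weight issue you flag at the end is genuine but cuts against the paper rather than you---the paper's own proof silently lets the cover element inherit the query graph's nBOW weights, so strictly speaking its $\E$ is not a fixed finite set of weighted graphs (which the union bound over $|\E|^2$ pairs in the uniform convergence theorem requires)---and your proposed repairs (a canonical weighting, or discretizing the weight simplex at accuracy $O(\epsilon/M)$, which degrades the EMD bound by only $O(\epsilon)$ since the ground space has bounded diameter and inflates $|\E|$ by a factor that only affects constants and log terms downstream) are the right way to make the argument airtight.
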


\begin{proposition}\label{coverrelation}
Let $\Delta_R(G_x,G_y)=k(G_x,G_y)-\tk(G_x,G_y)$. We have that if $|\Delta_R(G_i,G_j)|\leq t$, $\forall G_i,G_j\in\E$, where $\E$ is an $\frac{t}{4\gamma}-$covering of $\mathcal{X}$, and $\gamma$ is the parameter of $\phi_{G_{\omega}}$, then $|\Delta_R(G_x,G_y)|\leq 2t$, $\forall G_x,G_y \in \mathcal{X}$.
\end{proposition}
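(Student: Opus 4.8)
The plan is to run a standard Lipschitz-plus-covering argument: since $\Delta_R$ is controlled on the finite net $\E$ by hypothesis, it suffices to show that $\Delta_R$ cannot change by more than $t$ when we move from an arbitrary pair $(G_x,G_y)$ to the nearest net pair $(G_i,G_j)$. Concretely, I would first invoke the covering property of $\E$ (Lemma \ref{lemma:cover} with $\epsilon=\frac{t}{4\gamma}$) to pick net elements $G_i,G_j\in\E$ with $\EMD(G_x,G_i)\le \frac{t}{4\gamma}$ and $\EMD(G_y,G_j)\le \frac{t}{4\gamma}$. The target then follows from the splitting $|\Delta_R(G_x,G_y)|\le |\Delta_R(G_x,G_y)-\Delta_R(G_i,G_j)|+|\Delta_R(G_i,G_j)|$, where the second term is at most $t$ by assumption, so the whole argument reduces to bounding the first term by $t$.

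To control the first term I would establish Lipschitz continuity of the feature map with respect to the EMD metric. Because $\EMD$ satisfies the triangle inequality (it is a metric, as noted after \eqref{EMD}), we get $|\EMD(G_x,G_\omega)-\EMD(G_i,G_\omega)|\le \EMD(G_x,G_i)$ for every random graph $G_\omega$; composing with the map $x\mapsto e^{-\gamma x}$, which is $\gamma$-Lipschitz on $[0,\infty)$, yields $|\phi_{G_\omega}(G_x)-\phi_{G_\omega}(G_i)|\le \gamma\,\EMD(G_x,G_i)$. Since each factor $\phi_{G_\omega}(\cdot)\in[0,1]$, the usual add-subtract trick on the product gives $|\phi_{G_\omega}(G_x)\phi_{G_\omega}(G_y)-\phi_{G_\omega}(G_i)\phi_{G_\omega}(G_j)|\le \gamma\,\EMD(G_x,G_i)+\gamma\,\EMD(G_y,G_j)\le \frac{t}{2}$, uniformly in $G_\omega$.

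This pointwise-in-$G_\omega$ estimate is the crux, and it transfers immediately to both $k$ and $\tk$: integrating against the probability density $p(G_\omega)$ gives $|k(G_x,G_y)-k(G_i,G_j)|\le \frac{t}{2}$, and averaging over the $R$ i.i.d. samples $\{{G_\omega}_i\}_{i=1}^R$ in \eqref{RGE} gives the same $|\tk(G_x,G_y)-\tk(G_i,G_j)|\le \frac{t}{2}$ for the random approximation. Adding these two via the triangle inequality bounds $|\Delta_R(G_x,G_y)-\Delta_R(G_i,G_j)|$ by $t$, and combining with the hypothesis on $\E$ delivers the claimed $|\Delta_R(G_x,G_y)|\le 2t$ for all $G_x,G_y$.

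I expect the only delicate point to be the bookkeeping of constants: one must verify that the $\frac{t}{4\gamma}$ covering radius is exactly calibrated so that the two argument displacements (in $G_x$ and $G_y$, each contributing $\gamma\cdot\frac{t}{4\gamma}=\frac{t}{4}$) combine across the two functions $k$ and $\tk$ to produce precisely $t$, leaving room for the base bound on $\E$ to reach $2t$. The boundedness $\phi_{G_\omega}\in[0,1]$ and the metric (triangle-inequality) property of EMD are exactly what make the Lipschitz estimate clean, and both are already available from the setup, so no new machinery is needed.
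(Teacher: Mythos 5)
Your proposal is correct and follows essentially the same route as the paper's proof: cover $(G_x,G_y)$ by net points at EMD radius $\frac{t}{4\gamma}$, split $\Delta_R$ into the net term plus the perturbations of $k$ and $\tk$, and bound each perturbation by $\frac{t}{2}$ using the reverse triangle inequality for EMD together with the $\gamma$-Lipschitzness of $x\mapsto e^{-\gamma x}$. The only cosmetic difference is that the paper bounds the product $\phi_{G_\omega}(G_x)\phi_{G_\omega}(G_y)$ by writing it as a single exponential $\exp\bigl(-\gamma[\EMD(G_x,G_\omega)+\EMD(G_y,G_\omega)]\bigr)$ and applying the Lipschitz estimate once, whereas you use the add--subtract trick on the product with $\phi_{G_\omega}\in[0,1]$; both yield identical constants.
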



Thus, given $\Omega(\frac{1}{\epsilon^2})$ random graphs, the inner product of RGE can uniformly approximates the corresponding exact graph kernel within $\epsilon-$precision, with high probability, as shown in the following Theorem. 

\begin{theorem}\label{thm:RF}
The uniform convergence rate is
\begin{equation*}\label{converge_result}
P\left\{ \sup_{G_x,G_y\in\X} |\Delta_R(G_x,G_y)| \leq \epsilon\right\} \geq 1-2(1+\frac{16\gamma}{\epsilon})^{2dM}\exp(-R\epsilon^2/8).
\end{equation*}
Therefore, to guarantee $|\Delta_R(G_x,G_y)|\leq \epsilon$ with probability at least $1-\delta$, it suffices to have
$$
R = \Omega\biggl(\frac{Md}{\epsilon^2}\log(1+\frac{16\gamma}{\epsilon})+\frac{1}{\epsilon^2}\big[\log(\frac{1}{\delta})+\text{const}\big] \biggr).
$$
\end{theorem}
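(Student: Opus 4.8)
The plan is to prove Theorem~\ref{thm:RF} by combining the standard random-features concentration argument with the covering-number machinery established in Lemma~\ref{lemma:cover} and Proposition~\ref{coverrelation}. First I would observe that for any \emph{fixed} pair $(G_x,G_y)$, the random feature approximation $\tk(G_x,G_y)=\frac{1}{R}\sum_{i=1}^R \phi_{{G_\omega}_i}(G_x)\phi_{{G_\omega}_i}(G_y)$ is an empirical average of $R$ i.i.d.\ terms whose expectation is exactly $k(G_x,G_y)$, by the definition~\eqref{eq:GGK_EMD} and the fact that the ${G_\omega}_i$ are drawn i.i.d.\ from $p(G_\omega)$. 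Since $\phi_{G_\omega}(G_x)=\exp(-\gamma\,\EMD(G_x,G_\omega))\in(0,1]$, each summand lies in $[0,1]$, so $\Delta_R(G_x,G_y)$ is a centered average of bounded random variables. Hoeffding's inequality then gives a pointwise bound $P\{|\Delta_R(G_x,G_y)|>t\}\leq 2\exp(-2Rt^2)$, or with the slightly looser constant used in the statement, $2\exp(-Rt^2/2)$ for the relevant $t$.

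Next I would pass from the pointwise bound to a uniform bound over the whole (infinite) space $\X\times\X$ via the covering argument. The key reduction is Proposition~\ref{coverrelation}: to control $\sup_{G_x,G_y}|\Delta_R|$ within $2t$, it suffices to control $|\Delta_R(G_i,G_j)|\leq t$ on every pair drawn from an $\frac{t}{4\gamma}$-covering $\E$ of $\X$. By Lemma~\ref{lemma:cover}, such a covering has size $|\E|\leq(1+\frac{8\gamma}{t})^{Md}$, so the number of pairs is at most $|\E|^2\leq(1+\frac{8\gamma}{t})^{2Md}$. A union bound over all these pairs, combined with the pointwise Hoeffding estimate, yields
\begin{equation*}
P\left\{\sup_{G_x,G_y\in\X}|\Delta_R(G_x,G_y)|>2t\right\}\leq 2\left(1+\frac{8\gamma}{t}\right)^{2Md}\exp(-Rt^2/2).
\end{equation*}
Setting $\epsilon=2t$ (so $t=\epsilon/2$, giving $8\gamma/t=16\gamma/\epsilon$ and $Rt^2/2=R\epsilon^2/8$) produces precisely the advertised uniform convergence rate. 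The final sample-complexity statement then follows by routine algebra: I would set the right-hand failure probability equal to $\delta$, take logarithms, and solve for $R$, absorbing lower-order terms into the stated $\Omega(\cdot)$ expression.

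The main obstacle I anticipate is establishing that Proposition~\ref{coverrelation}'s hypothesis genuinely transfers the covering of $\X$ into a \emph{Lipschitz-type} control of $\Delta_R$ off the net. Concretely, one must verify that $G\mapsto\phi_{G_\omega}(G)$ is Lipschitz in the EMD metric with constant $\gamma$, so that moving $G_x$ to its nearest net point $G_i$ changes $\phi_{G_\omega}(G_x)$ by at most $\gamma\,\EMD(G_x,G_i)\leq\gamma\cdot\frac{t}{4\gamma}=t/4$; this uses that $|e^{-\gamma a}-e^{-\gamma b}|\leq\gamma|a-b|$ together with the triangle inequality for EMD established after~\eqref{EMD}. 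Since this Lipschitz reduction is exactly what Proposition~\ref{coverrelation} packages, I would treat it as given and focus the writing on the union-bound bookkeeping and the constant-tracking in the substitution $\epsilon=2t$; the only genuinely delicate point is ensuring the covering radius $\frac{t}{4\gamma}$ is matched consistently between the invocation of Lemma~\ref{lemma:cover} and the resulting exponent $2Md$.
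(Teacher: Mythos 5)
Your proposal is correct and follows essentially the same route as the paper's proof: a pointwise Hoeffding bound on pairs from the $\frac{t}{4\gamma}$-covering $\E$ given by Lemma~\ref{lemma:cover} (of size $(1+\frac{8\gamma}{t})^{Md}$), a union bound over the at most $|\E|^2$ pairs, the reduction off the net via Proposition~\ref{coverrelation}, and the substitution $t=\epsilon/2$. Your side remark that Hoeffding actually yields the sharper constant $2\exp(-2Rt^2)$, of which the paper's $2\exp(-Rt^2/2)$ is a valid relaxation, is accurate but does not change the argument.
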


\begin{proof}
Based on Proposition \ref{coverrelation}, we have 
\begin{equation}\label{them:eq1}
\begin{aligned}
&P\left\{\mathrm{sup}_{G_x, G_y\in\mathcal{X}}\vert\Delta_R(G_x,G_y)\vert\leq 2t\right\}\\
\geq &P\left\{\mathrm{sup}_{G_i, G_j\in\E}\vert\Delta_R(G_x,G_y)\vert\leq t\right\}.
\end{aligned}
\end{equation}
For any $G_i, G_j\in \E$, since $E[\Delta_R(G_i,G_j)]=0$ and $|\Delta_R(G_i,G_j)|\leq 1$, from the Hoeffding inequality, we have
\begin{equation}
P\left\{ |\Delta_R(G_i,G_j)|\geq t \right\} \leq 2 \exp(-Rt^2/2).
\end{equation}
Therefore,
\begin{equation}\label{them:eq2}
\begin{aligned}
&P\left\{\mathrm{sup}_{G_i, G_j\in\E}\vert\Delta_R(G_i,G_j)\vert\geq t\right\}\\
\leq&\sum_{G_i, G_j\in\E}P\{\vert\Delta_R(G_i,G_j)\vert\geq t\}\\
\leq &2\vert\E\vert^2\exp(-Rt^2/2)\leq2(1+\frac{8\gamma}{t})^{2dM}\exp(-Rt^2/2).
\end{aligned}
\end{equation}
Combining \eqref{them:eq1} and \eqref{them:eq2}, and setting $t=\frac{\epsilon}{2}$,  we obtain the desired result.
\end{proof}

The above theorem states that, to find an $\epsilon$ approximation to the exact kernel, it suffices to have number of random features 
$R=\Omega(\frac{1}{\epsilon^2})$.
We refer interested readers to the details of the proof of Theorem \eqref{thm:RF} in Appendix \ref{appendix: proof of lemma and theory }.

\section{Experiments}
We performed experiments to demonstrate the effectiveness and efficiency of the proposed method, and compared against a total of twelve graph kernels and deep graph neural networks on nine benchmark datasets\footnote{http://members.cbio.mines-paristech.fr/~nshervashidze/code/} widely used for testing the performance of graph kernels. We implemented our method in Matlab and utilized the C-MEX function\footnote{http://ai.stanford.edu/$\sim$rubner/emd/default.htm} for the computationally expensive component of EMD. 
All computations were carried out on a DELL system with Intel Xeon processors 272 at 2.93GHz for a total of 16 cores and 250 GB of memory, running the SUSE Linux operating system.

\textbf{Datasets.} 
We applied our method to widely-used graph classification benchmarks from multiple domains \cite{shervashidze2011weisfeiler,vishwanathan2010graph,yanardag2015deep}; MUTAG, PTC-MR, ENZYMES, PROTEINS, NCI1, and NCI109 are graphs derived from small molecules and macromolecules, and IMDB-B, IMDB-M, and COLLAB are derived from social networks. All datasets have binary labels except ENZYMES, IMDB-M, and COLLAB which have 6, 3 and 3 classes, respectively. All bioinformatics graph datasets have node labels while all other social network graphs have no node labels. Detailed descriptions of these 9 datasets, including statistical properties, are provided in the Appendix.

\textbf{Baselines.}
Due to the large literature, we compare our method RGE against five representative global kernels related to our approach and three classical graph kernels, including
EMD-based Indefinite Kernel (EMD) \cite{nikolentzos2017matching},
Pyramid Match Kernel (PM) \cite{nikolentzos2017matching},
Lov{\'a}sz $\theta$ Kernel (Lo-$\theta$) \cite{johansson2014global},
Optimal Assignment Matching (OA-{$E_\lambda$}(A)) \cite{johansson2015learning},
Vertex Optimal Assignment Kernel (V-OA) \cite{kriege2016valid},
Random Walk Kernel (RW) \cite{gartner2003graph}, 
Graphlet Kernel (GL) \cite{shervashidze2009efficient}, and
Shortest Path Kernel (SP) \cite{borgwardt2005shortest}. Furthermore, we also compare RGE with several variants of Weisfeler-Leman Graph Kernel (WL-ST \cite{shervashidze2011weisfeiler}, WL-SP \cite{shervashidze2011weisfeiler}, and WL-OA-{$E_\lambda$}(A) \cite{johansson2015learning}). Finally, we compare RGE against four recently developed deep learning models with node labels, including 
Deep Graph Convolutional Neural Networks (DGCNN), \cite{wu2017dgcnn};
PATCHY-SAN (PSCN) \cite{niepert2016learning},
Diffusion CNN (DCNN) \cite{atwood2017sparse}, and
Deep Graphlet Kernel (DGK) \cite{yanardag2015deep}. The first three models are built on convolutional neural networks on graphs while the last one is based on Word2Vec model. 
Since WL test is a generic technique to utilize discrete node labels for improving many stand-alone graph kernels, in this study, we first focus on testing the capability of each graph kernel without node labels and then assess the performance of each graph kernel with plain node labels and with WL techniques
\footnote{Our approach to combine RGE(ASG) with WL techniques is to first use WL to generate new node labels and then apply RGE(ASG) with these node labels.}.

\begin{figure*}[!htbp]
\centering
    \begin{subfigure}[b]{0.245\textwidth}
      \includegraphics[width=\textwidth]{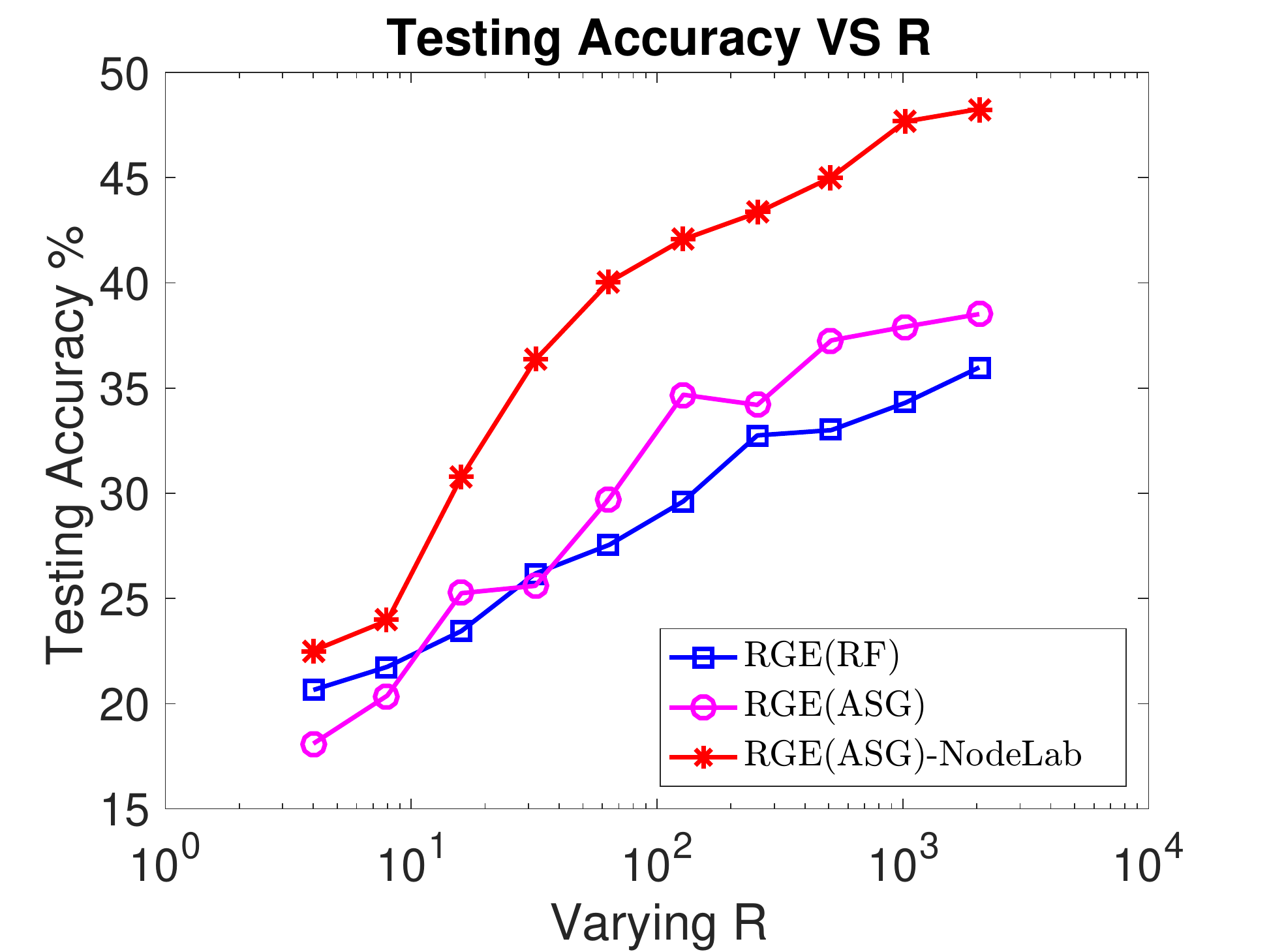}
      \caption{ENZYMES}
      \label{fig:exptsA_accu_varyingR_ENZYMES}
    \end{subfigure}
    \begin{subfigure}[b]{0.245\textwidth}
      \includegraphics[width=\textwidth]{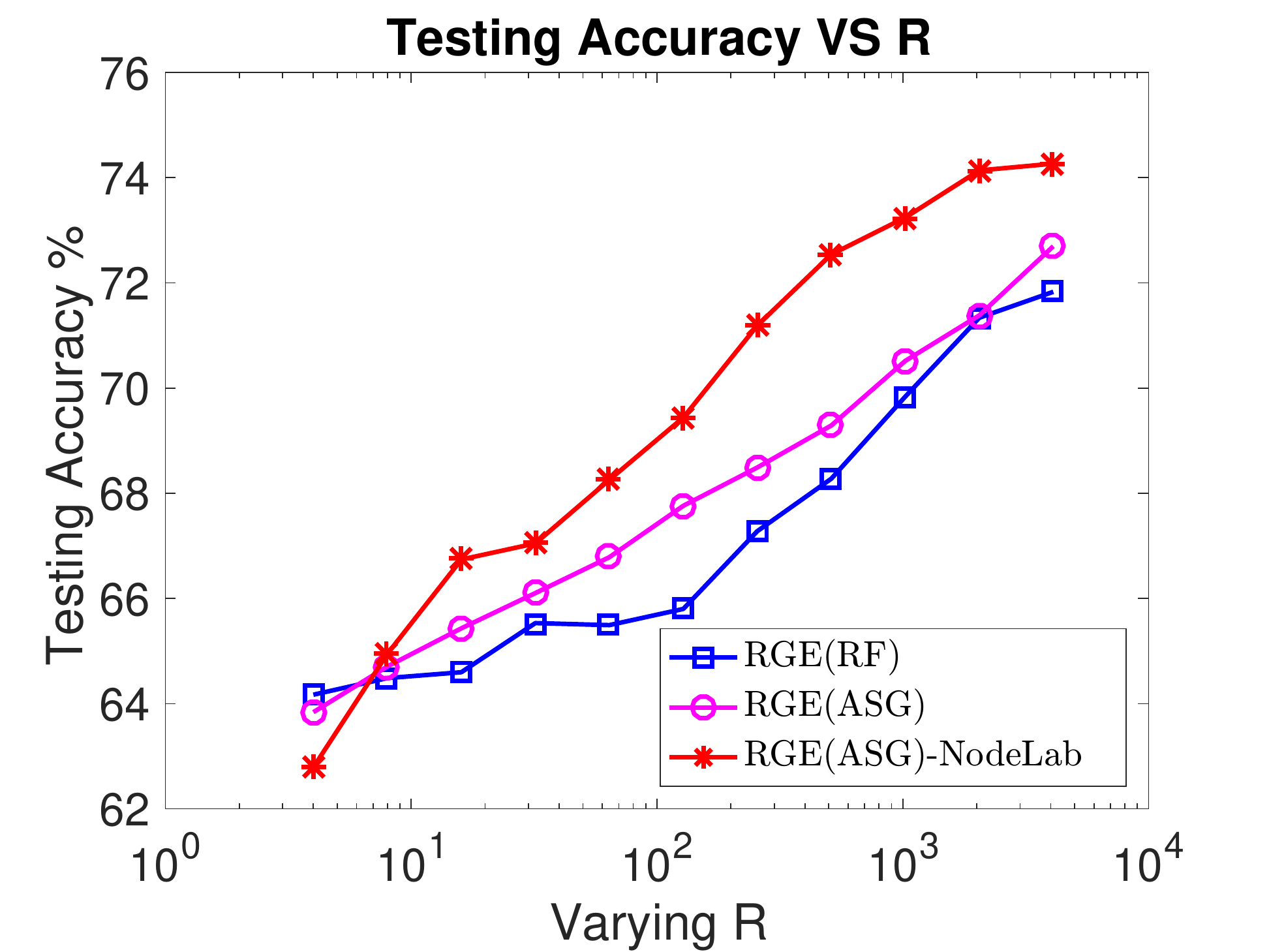}
      \caption{NCI109}
      \label{fig:exptsA_accu_varyingR_NCI109}
    \end{subfigure}
    \begin{subfigure}[b]{0.245\textwidth}
      \includegraphics[width=\textwidth]{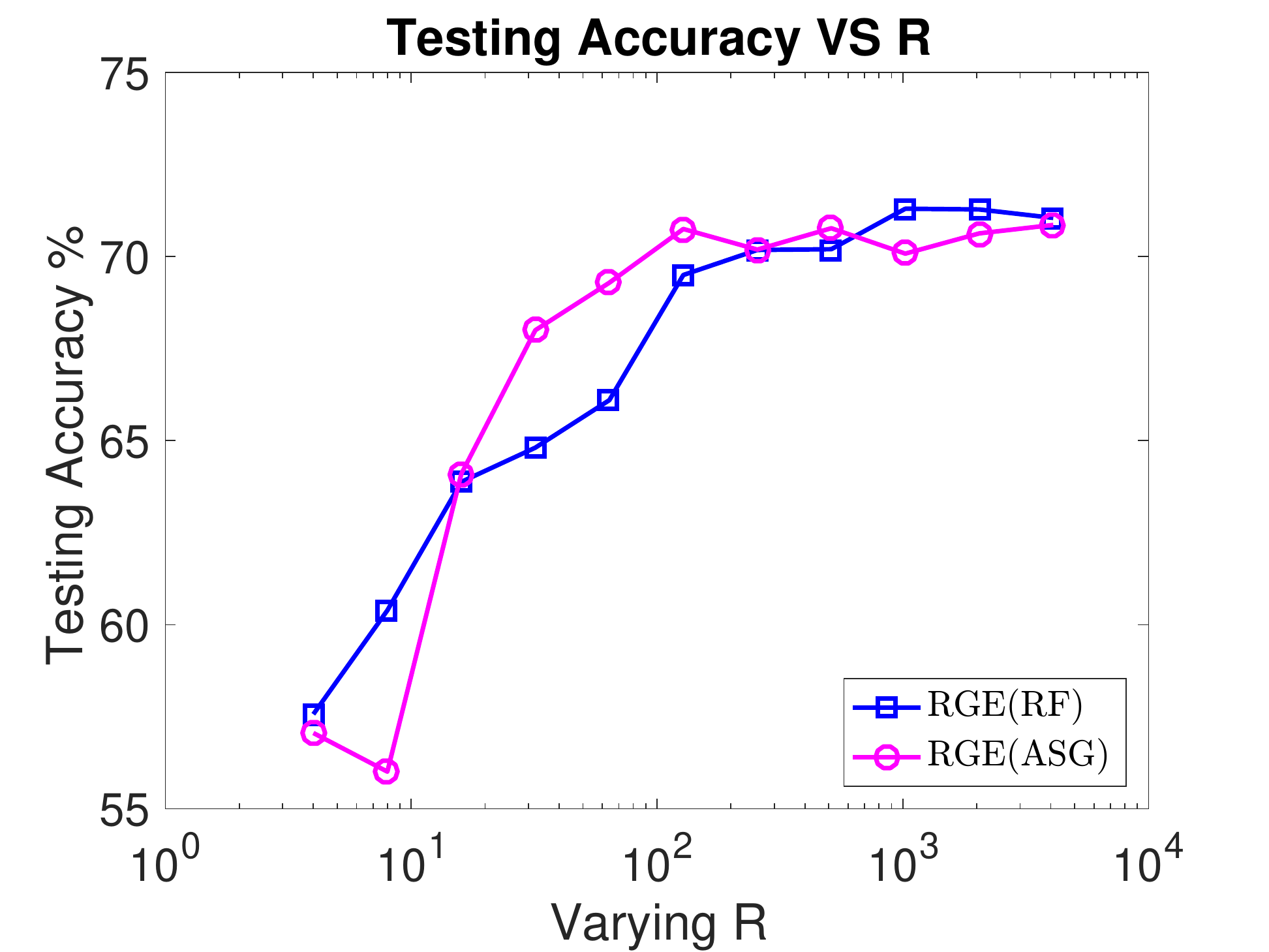}
      \caption{IMDBBINARY}
      \label{fig:exptsA_accu_varyingR_IMDBBINARY}
    \end{subfigure}
    \begin{subfigure}[b]{0.245\textwidth}
      \includegraphics[width=\textwidth]{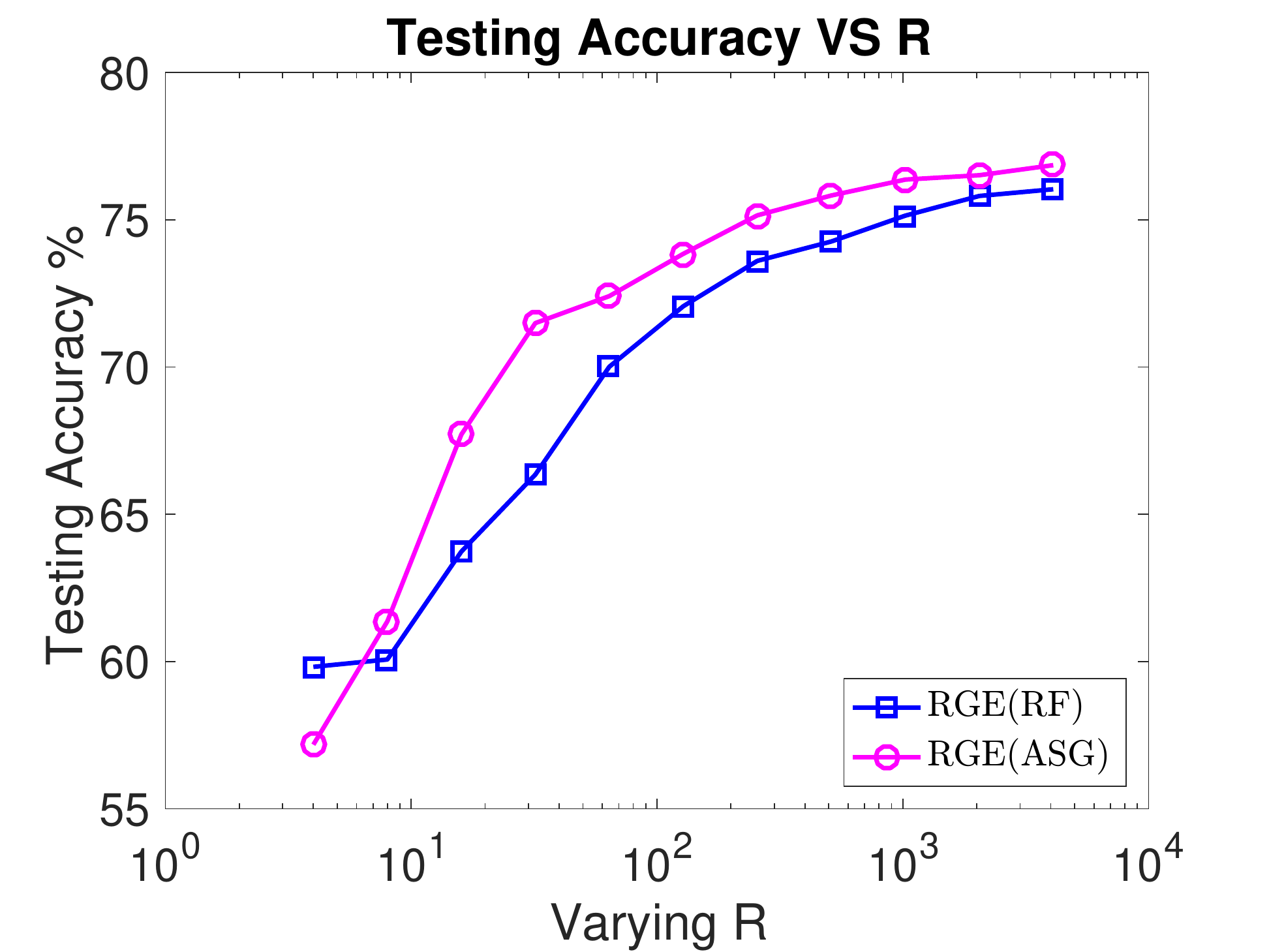}
      \caption{COLLAB}
      \label{fig:exptsA_accu_varyingR_COLLAB}
    \end{subfigure}
    \begin{subfigure}[b]{0.245\textwidth}
      \includegraphics[width=\textwidth]{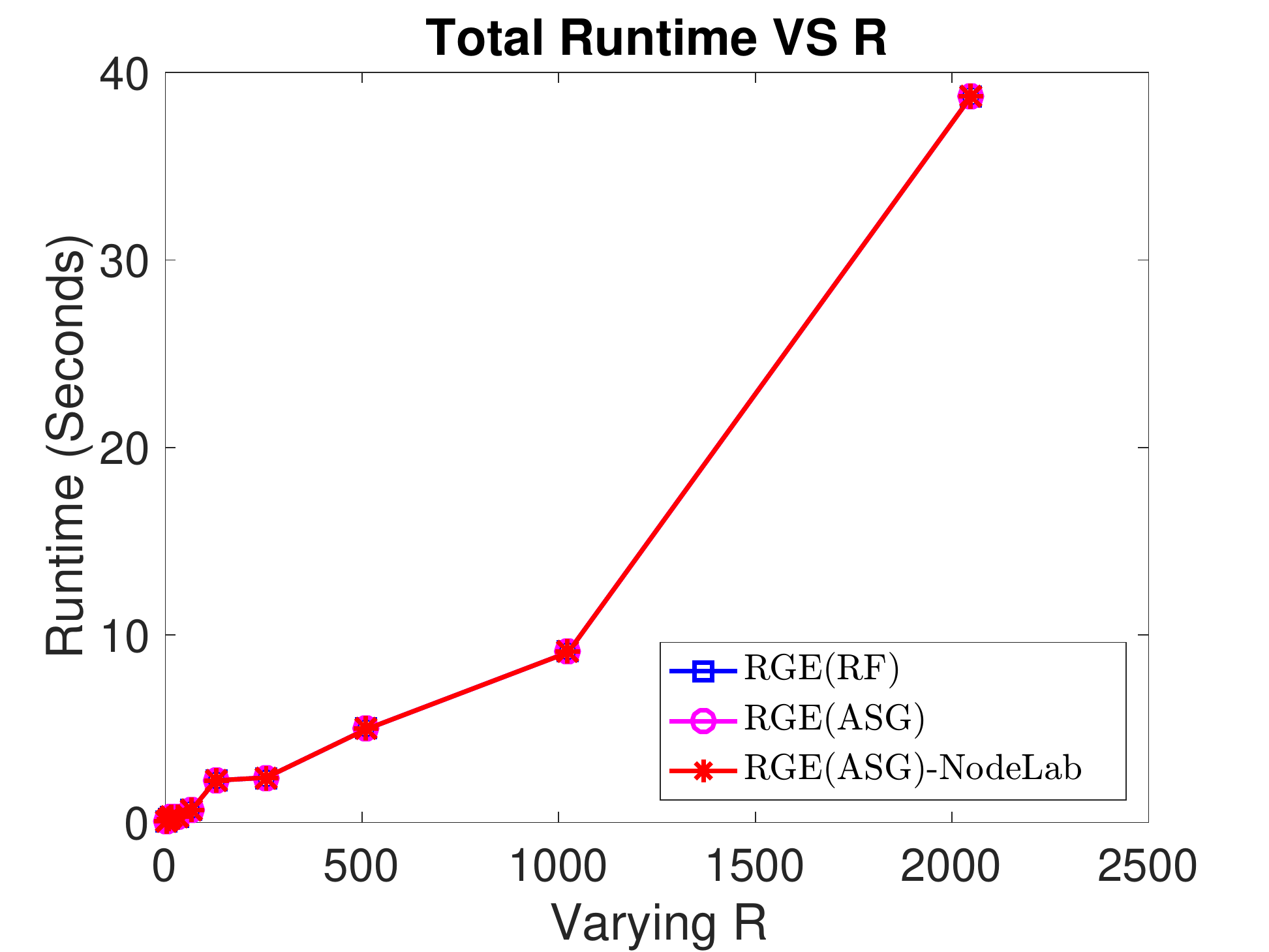}
      \caption{ENZYMES}
      \label{fig:exptsA_time_varyingR_ENZYMES}
    \end{subfigure}
    \begin{subfigure}[b]{0.245\textwidth}
      \includegraphics[width=\textwidth]{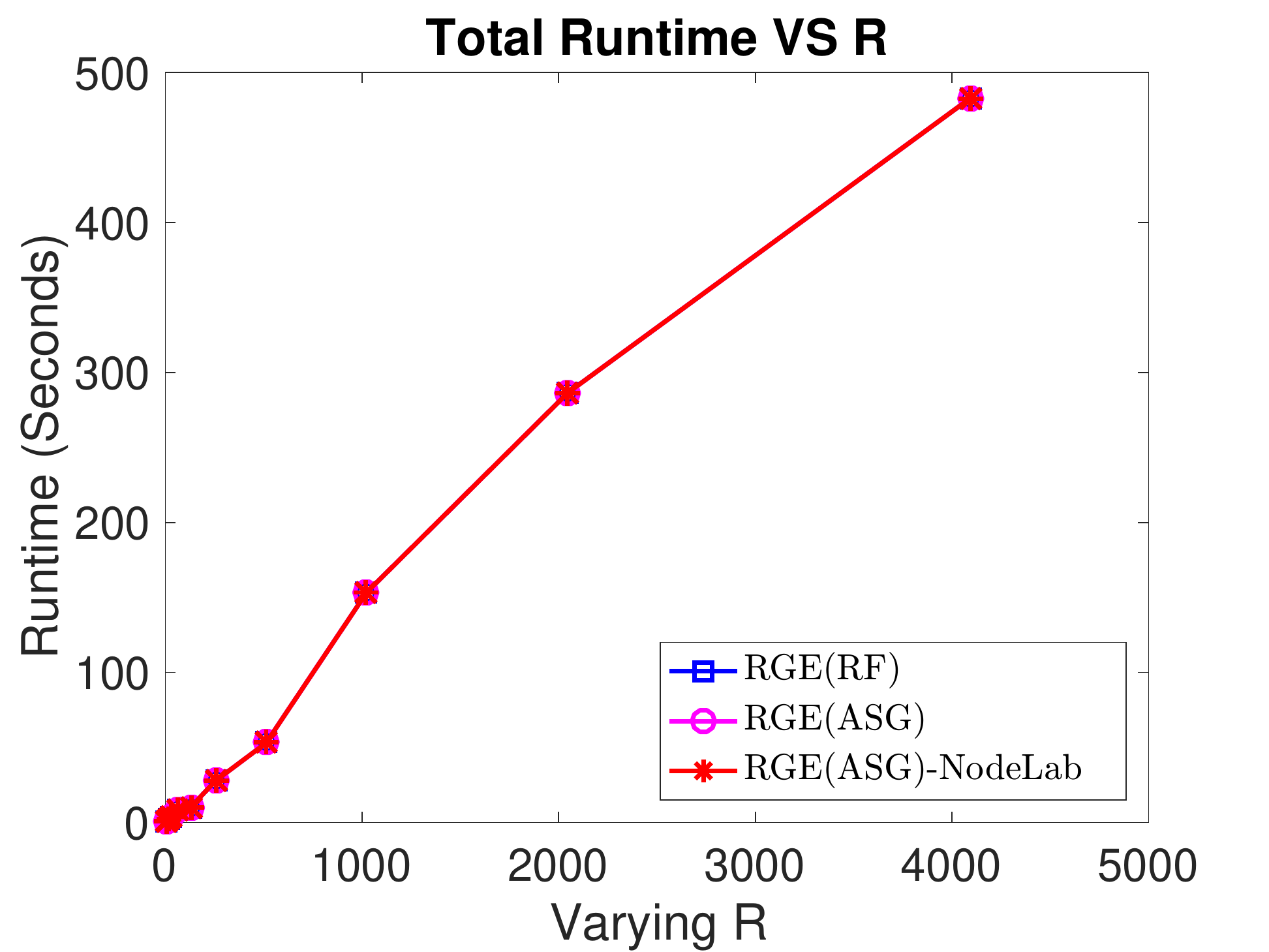}
      \caption{NCI109}
      \label{fig:exptsA_time_varyingR_NCI109}
    \end{subfigure}
    \begin{subfigure}[b]{0.245\textwidth}
      \includegraphics[width=\textwidth]{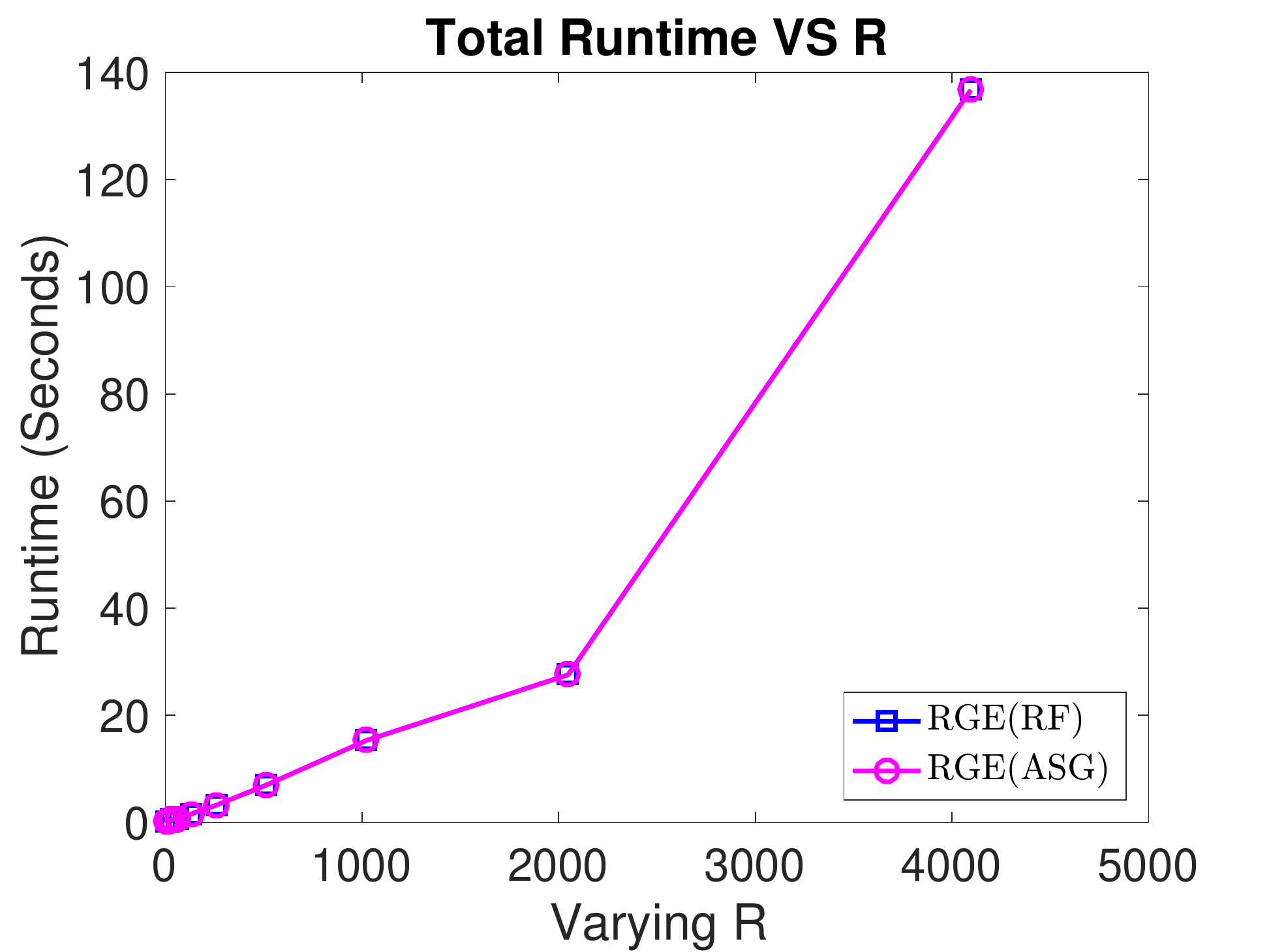}
      \caption{IMDBBINARY}
      \label{fig:exptsA_time_varyingR_IMDBBINARY}
    \end{subfigure}
     \begin{subfigure}[b]{0.245\textwidth}
      \includegraphics[width=\textwidth]{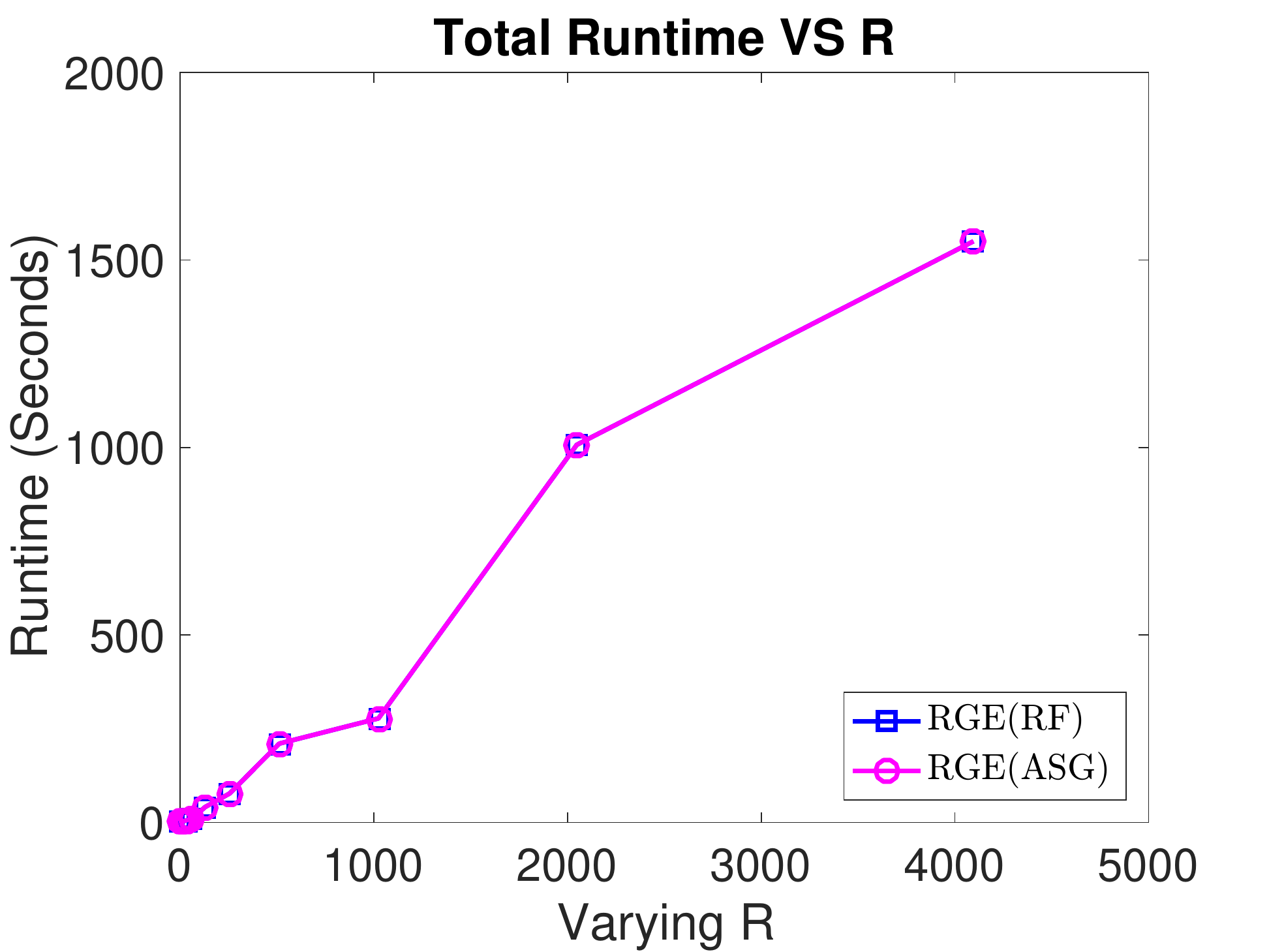}
      \caption{COLLAB}
      \label{fig:exptsA_time_varyingR_COLLAB}
    \end{subfigure}
\vspace{-2mm}
\caption{Test accuracies and runtime of three variants of RGE with and without node labels when varying $R$. }
\label{fig:exptsA_accu_time_varyingR}
\end{figure*}

\begin{figure*}[htbp]
\vspace{-0mm}
   \centering
    \begin{subfigure}[b]{0.40\textwidth}
      \includegraphics[width=\textwidth]{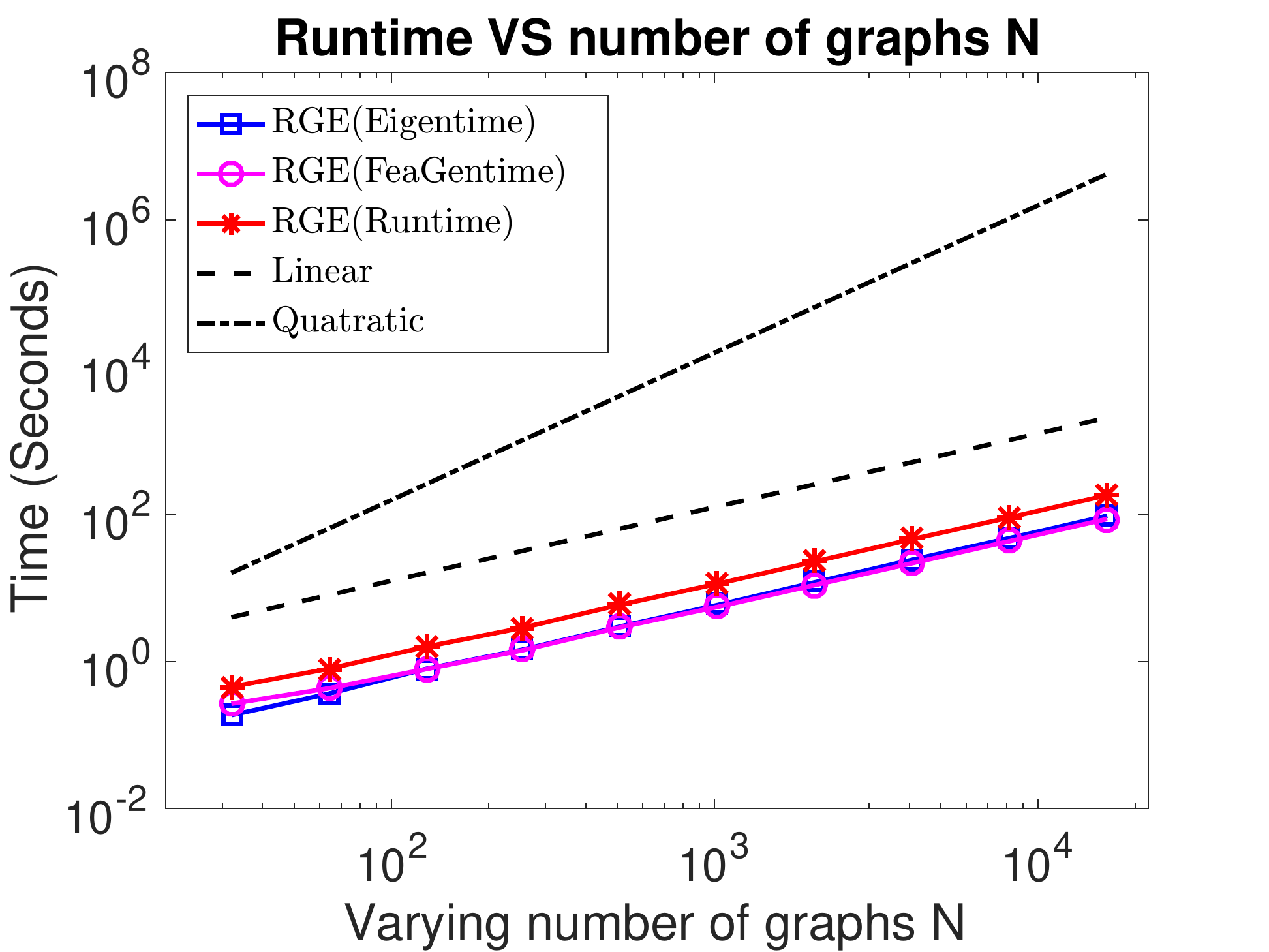}
      \caption{Number of graphs N}
      \label{fig:exptsB_time_varyingNumGs}
    \end{subfigure}
  \begin{subfigure}[b]{0.40\textwidth}
      \includegraphics[width=\textwidth]{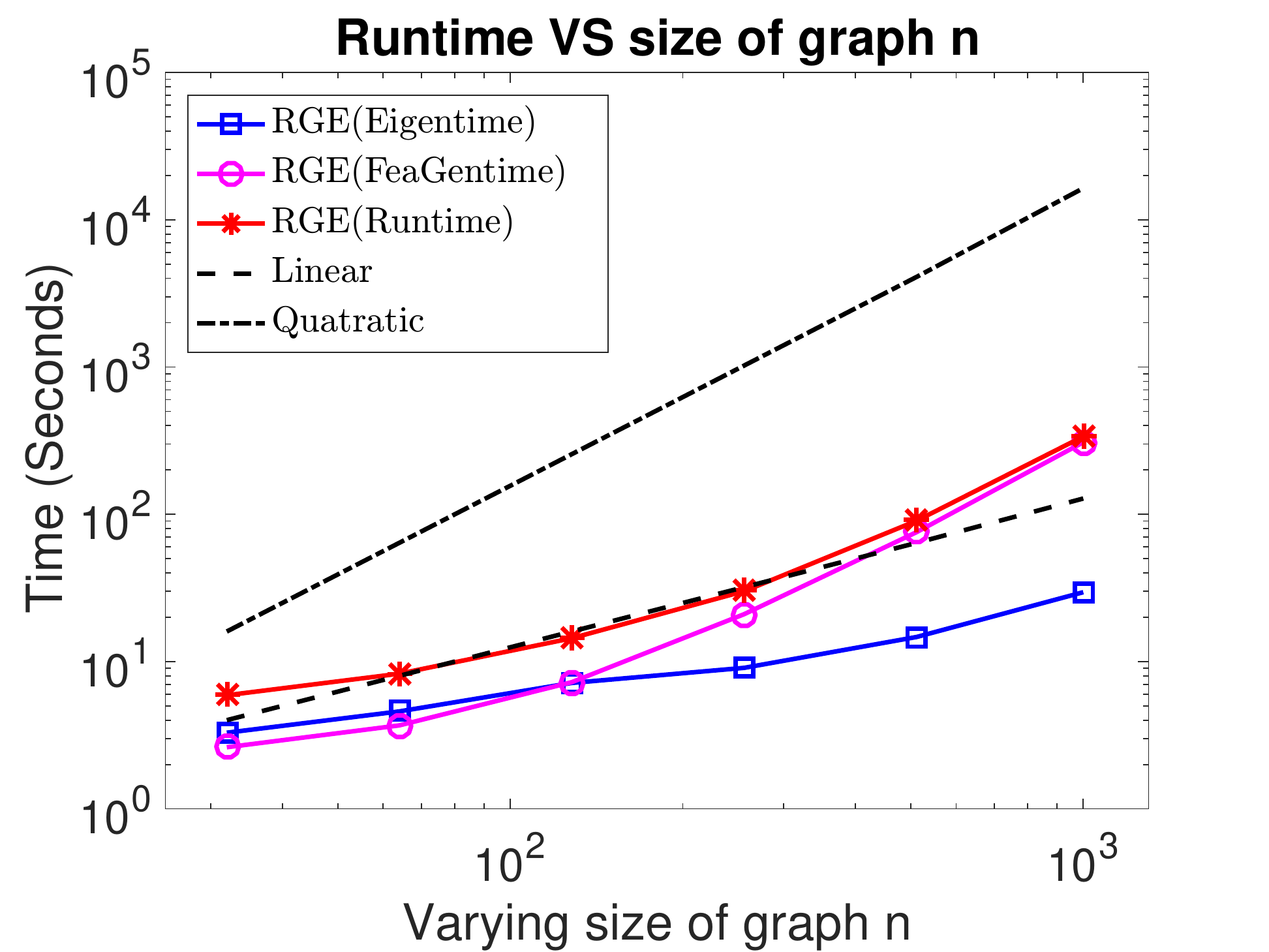}
      \caption{Size of graph n}
      \label{fig:exptsB_time_varyingNumNs}
    \end{subfigure}
\vspace{-2mm}
\caption{Runtime for computing node embeddings and RGE graph embeddings, and overall runtime when varying number of graphs N and size of graph n. (Default values: number of graphs $N=1000$, graph size $n=100$, edge size $m=200$). Linear and quadratic complexity are also plotted for easy comparison.}
\vspace{-0mm}
\label{fig:exptsB_time_varyingNumGs_NumNs}
\end{figure*}

\textbf{Setup.} Since RGE is a graph embedding, we directly employ a linear SVM implemented in LIBLINEAR \cite{fan2008liblinear} since it can faithfully separate the effectiveness of our feature representation from the power of the nonlinear learning solvers. Following the convention of the graph kernel literature, we perform 10-fold cross-validation, using 9 folds for training and 1 for testing, and repeat the whole experiments ten times (thus 100 runs per dataset) and report the average prediction accuracies and standard deviations. The ranges of hyperparameters $\gamma$ and $D\_max$ are [1e-3 1e-2 1e-1 1 10] and [3:3:30], respectively. All parameters of the SVM and hyperparameters of our method were optimized only on the training dataset. 
To eliminate the random effects, we repeat the whole experiments ten times and report the average prediction accuracies and standard deviations. 
For all baselines we have taken the best reported number from their papers. Since EMD is the closet method to ours, we execute both methods under the same setting for fair comparison and report both accuracy and computational time. 

\textbf{Impacts of $R$ on Accuracy and Runtime of RGE.} 
We conducted experiments investigating the convergence behavior and the scalability of three variants of RGE with or without using node labels when increasing the number $R$ of random graphs.
The hyperparameter $D$ is obtained from the previous cross-validations on the training set. 
We report both testing accuracy and runtime when increasing graph embedding size $R$. 
As shown in Fig. \ref{fig:exptsA_accu_time_varyingR}, all variants of RGE converge very rapidly when increasing R from a small number (R = 4) to relatively large number ($R = 2^k > n$). 
This confirms our analysis in Theorem \ref{thm:RF} that the RGE approximation can guarantee rapid convergence to the exact kernel. The second observation is that RGE exhibits quasi-linear scalability with respect to $R$, as predicted by our computational analysis. This is particularly important for large scale graph data since most graph kernels have quadratic complexity in the number of graphs and/or in the size of graphs.

\begin{table*}[htbp]
\centering
\caption{Comparison of classification accuracy against graph kernel methods without node labels.} 
\label{tb:comp_graphs_wo_node_label}
\newcommand{\Bd}[1]{\textbf{#1}}
\vspace{-3mm}
\begin{center}
    \begin{tabular}{ c c c c c c }
    \hline
    \multicolumn{1}{c}{Datasets}
    & \multicolumn{1}{c}{MUTAG} 
    & \multicolumn{1}{c}{PTC-MR}
    & \multicolumn{1}{c}{ENZYMES} 
    & \multicolumn{1}{c}{NCI1} 
    & \multicolumn{1}{c}{NCI019} \\ \hline 
    RGE(RF)  & \Bd{86.33 {\tiny$\pm$ 1.39}}(1s) & 59.82 {\tiny$\pm$ 1.42}(1s) & 35.98 {\tiny$\pm$ 0.89}(38s)  & \Bd{74.70 {\tiny$\pm$ 0.56}}(727s) & 72.50 {\tiny$\pm$ 0.32}(865s) \\
    RGE(ASG)  & 85.56 {\tiny$\pm$ 0.91}(2s) & \Bd{59.97 {\tiny$\pm$ 1.65}} (1s) & \Bd{38.52 {\tiny$\pm$ 0.91}}(18s) & 74.30 {\tiny$\pm$ 0.45}(579s) & \Bd{72.70 {\tiny$\pm$ 0.42}}(572s)  \\ \hline
    EMD  & 84.66 {\tiny $\pm$ 2.69} (7s) & 57.65 {\tiny$\pm$ 0.59}  (46s) & 35.45 {\tiny$\pm$ 0.93}  (216s) & 72.65 {\tiny$\pm$ 0.34}  (8359s) & 70.84 {\tiny$\pm$ 0.18} (8281s) \\
    PM & 83.83 {\tiny$\pm$ 2.86} & 59.41 {\tiny$\pm$ 0.68} & 28.17 {\tiny$\pm$ 0.37} & 69.73 {\tiny$\pm$ 0.11} & 68.37 {\tiny$\pm$ 0.14} \\
    Lo-$\theta$ & 82.58 {\tiny$\pm$ 0.79} & 55.21 {\tiny$\pm$ 0.72} & 26.5 {\tiny$\pm$ 0.54} & 62.28 {\tiny$\pm$ 0.34} & 62.52 {\tiny$\pm$ 0.29} \\
    OA-{$E_\lambda$}(A) & 79.89 {\tiny$\pm$ 0.98} & 56.77 {\tiny$\pm$ 0.85} & 36.12 {\tiny$\pm$ 0.81} & 67.99 {\tiny$\pm$ 0.28} & 67.14 {\tiny$\pm$ 0.26}\\
	RW & 77.78 {\tiny$\pm$ 0.98} & 56.18 {\tiny$\pm$ 1.12} & 20.17 {\tiny$\pm$ 0.83} & 56.89 {\tiny$\pm$ 0.34} & 56.13 {\tiny$\pm$ 0.31} \\
	GL & 66.11 {\tiny$\pm$ 1.31} & 57.05 {\tiny$\pm$ 0.83} & 18.16 {\tiny$\pm$ 0.47} & 47.37 {\tiny$\pm$ 0.15} & 48.39 {\tiny$\pm$ 0.18} \\
	SP  & 82.22 {\tiny$\pm$ 1.14} & 56.18 {\tiny$\pm$ 0.56} & 28.17 {\tiny$\pm$ 0.64} & 62.02 {\tiny$\pm$ 0.17} & 61.41 {\tiny$\pm$ 0.32} \\
    \hline
    \end{tabular}   
\end{center}
\vspace{-2mm}
\end{table*}

\begin{table*}[htbp]
\centering
\caption{Comparison of classification accuracy against graph kernel methods with node labels or WL technique.} 
\label{tb:comp_graphs_node_label}
\newcommand{\Bd}[1]{\textbf{#1}}
\vspace{-3mm}
\begin{center}
    \begin{tabular}{ c c c c c c }
    \hline
    \multicolumn{1}{c}{Datasets}
    & \multicolumn{1}{c}{PTC-MR}
    & \multicolumn{1}{c}{ENZYMES} 
    & \multicolumn{1}{c}{PROTEINS} 
    & \multicolumn{1}{c}{NCI1} 
    & \multicolumn{1}{c}{NCI019} \\ \hline 
    RGE(ASG)  & \Bd{61.5 {\tiny$\pm$ 2.34}}(1s) & \Bd{48.27 {\tiny$\pm$ 0.99}}(28s) & 75.98 {\tiny$\pm$ 0.71}(20s)  & \Bd{76.46 {\tiny$\pm$ 0.45}}(379s) & \Bd{74.42 {\tiny$\pm$ 0.30}}(526s)  \\ \hline
    EMD  & 57.67 {\tiny$\pm$ 2.11} (42s) & 42.85 {\tiny$\pm$ 0.72} (296s) & \Bd{76.03 {\tiny$\pm$ 0.28}} (1936s) & 75.89 {\tiny$\pm$ 0.16} (7942s) & 73.63 {\tiny$\pm$ 0.33} (8073s) \\
    PM & 60.38 {\tiny$\pm$ 0.86} & 40.33 {\tiny$\pm$ 0.34} & 74.39 {\tiny$\pm$ 0.45}  & 72.91 {\tiny$\pm$ 0.53} & 71.97 {\tiny$\pm$ 0.15} \\
    OA-{$E_\lambda$}(A) & 58.76 {\tiny$\pm$ 0.92} & 43.56 {\tiny$\pm$ 0.66} & --- & 69.83 {\tiny$\pm$ 0.30} & 68.96 {\tiny$\pm$ 0.35} \\
    V-OA & 56.4 {\tiny$\pm$ 1.8} & 35.1 {\tiny$\pm$ 1.1} & 73.8 {\tiny$\pm$ 0.5} & 65.6 {\tiny$\pm$ 0.4} & 65.1 {\tiny$\pm$ 0.4}\\
	RW  & 57.06 {\tiny$\pm$ 0.86} & 19.33 {\tiny$\pm$ 0.62} & 71.67 {\tiny$\pm$ 0.78} & 63.34 {\tiny$\pm$ 0.27} & 63.51 {\tiny$\pm$ 0.18} \\
    GL & 59.41 {\tiny$\pm$ 0.94} & 32.70 {\tiny$\pm$ 1.20} &  71.63 {\tiny$\pm$ 0.33} & 66.00 {\tiny$\pm$ 0.07} & 66.59 {\tiny$\pm$ 0.08} \\
	SP & 60.00 {\tiny$\pm$ 0.72} & 41.68 {\tiny $\pm$ 1.79} & 73.32 {\tiny$\pm$ 0.45} & 73.47 {\tiny$\pm$ 0.11} & 73.07 {\tiny$\pm$ 0.11} \\
    \hline
    WL-RGE(ASG)  & \Bd{62.20 {\tiny$\pm$ 1.67}}(1s) & 57.97 {\tiny$\pm$ 1.16}(38s) & \Bd{76.63 {\tiny$\pm$ 0.82}}(30s)  & \Bd{85.85 {\tiny$\pm$ 0.42}}(401s) & \Bd{85.32 {\tiny$\pm$ 0.29}}(798s)  \\ \hline
    WL-ST & 57.64 {\tiny$\pm$ 0.68} & 52.22 {\tiny$\pm$ 0.71} & 72.92 {\tiny$\pm$ 0.67} & 82.19 {\tiny$\pm$ 0.18} & 82.46 {\tiny$\pm$ 0.24}\\ 
    WL-SP & 56.76 {\tiny$\pm$ 0.78} & \Bd{59.05 {\tiny$\pm$ 1.05}} & 74.49 {\tiny$\pm$ 0.74} & 84.55 {\tiny$\pm$ 0.36} & 83.53 {\tiny$\pm$ 0.30}\\
    WL-OA-{$E_\lambda$}(A) & 59.72 {\tiny$\pm$ 1.10} & 53.76 {\tiny$\pm$ 0.82} & --- & 84.75 {\tiny$\pm$ 0.21} & 84.23 {\tiny$\pm$ 0.19}\\
    \hline
    \end{tabular}   
\end{center}
\vspace{-2mm}
\end{table*}

\begin{table*}[htbp]
\centering
\caption{Comparison of classification accuracy against recent deep learning models on graphs.} 
\label{tb:comp_DLApproaches}
\newcommand{\Bd}[1]{\textbf{#1}}
\vspace{-3mm}
\begin{center}
    \begin{tabular}{ c c c c c c c }
    \hline
    \multicolumn{1}{c}{Datasets}
    & \multicolumn{1}{c}{PTC-MR}
    & \multicolumn{1}{c}{PROTEINS}
    & \multicolumn{1}{c}{NCI1} 
    & \multicolumn{1}{c}{IMDB-B} 
    & \multicolumn{1}{c}{IMDB-M}
    & \multicolumn{1}{c}{COLLAB} \\ \hline 
    (WL-)RGE(ASG)  & 62.20 {\tiny$\pm$ 1.67} & \Bd{76.63 {\tiny$\pm$ 0.82}} & \Bd{85.85 {\tiny$\pm$ 0.42}} & \Bd{71.48 {\tiny$\pm$ 1.01}} & 47.26 {\tiny$\pm$ 0.89} & \Bd{76.85 {\tiny$\pm$ 0.34}} \\ \hline
    DGCNN  & 58.59 {\tiny$\pm$ 2.47} & 75.54 {\tiny$\pm$ 0.94} & 74.44 {\tiny$\pm$ 0.47} & 70.03 {\tiny$\pm$ 0.86} & \Bd{47.83 {\tiny$\pm$ 0.85}} & 73.76 {\tiny$\pm$ 0.49} \\
    PSCN & \Bd{62.30 {\tiny$\pm$ 5.70}} & 75.00 {\tiny$\pm$ 2.51} & 76.34 {\tiny$\pm$ 1.68} & 71.00 {\tiny$\pm$ 2.29} & 45.23 {\tiny$\pm$ 2.84} & 72.60 {\tiny$\pm$ 2.15} \\
    DCNN & 56.6 {\tiny$\pm$ 1.20} & 61.29 {\tiny$\pm$ 1.60}  & 56.61 {\tiny$\pm$ 1.04} & 49.06 {\tiny$\pm$ 1.37} & 33.49 {\tiny$\pm$ 1.42} & 52.11 {\tiny$\pm$ 0.53} \\
    DGK & 57.32 {\tiny$\pm$ 1.13} & 71.68 {\tiny$\pm$ 0.50} & 62.48 {\tiny$\pm$0.25} & 66.96 {\tiny$\pm$ 0.56} & 44.55 {\tiny$\pm$ 0.52} & 73.09 {\tiny$\pm$ 0.25} \\
    \hline
    \end{tabular}   
\end{center}
\vspace{-2mm}
\end{table*}

\textbf{Scalability of RGE varying $N$ graphs and $n$ nodes.} 
We further assess the scalability of RGE when varying number of graphs $N$ and size of graph $n$ for randomly generated graphs. 
We change the number of graphs in the range of $N = [8 \ 16384]$ and the size of graph in the range of $n = [8 \ 1024]$, respectively. 
When generating random adjacency matrices, we set the number of edges always be twice the number of nodes in a graph. 
We report the runtime for computing node embeddings using a state-of-the-art eigensolver \cite{wu2017primme_svds}, generating RGE graph embeddings, and the overall computation of graph classification, accordingly. Fig. \ref{fig:exptsB_time_varyingNumGs_NumNs}(a) shows the linear scalability of RGE when increasing the number of graphs, confirming our complexity analysis in the previous Section. In addition, as shown in Fig. \ref{fig:exptsB_time_varyingNumGs_NumNs}(b), RGE still exhibits linear scalablity in computing eigenvectors but slightly quasi-linear scalablity in RGE generation time and overall time, when increasing the size of graph. 
This is because that even though RGE reduces conventional EMD's complexity from super-cubic $O(n^3log(n)$ to $O(D^2nlog(n)$ (where D is a small constant), the log factor starts to show its impact on computing EMD between raw graphs and small random graphs when $n$ becomes large (e.g. close to 1000). Interestingly, with a state-of-the-art eigensolver, the complexity of computing a few eigenvectors is linearly proportional to the graph size $n$ \cite{wu2017primme_svds}. This is highly desired property of our RGE embeddings, which open the door to large-scale applications of graph kernels for various applications such as social networks analysis and computational biology.

\textbf{Comparison with All Baselines.} Tables \ref{tb:comp_graphs_wo_node_label}, \ref{tb:comp_graphs_node_label}, and \ref{tb:comp_DLApproaches} show that RGE consistently outperforms or matches other state-of-the-art graph kernels and deep learning approaches in terms of classification accuracy. There are several further observations worth making here. 
First, EMD, the closest method to RGE, shows good performance compared to most of other methods but often has significantly worse performance than RGE, highlighting the utility the novel graph kernel design using a feature map of random graphs and the effectiveness of a truly p.d. kernel. Importantly, RGE is also orders of magnitude faster than EMD in all cases, especially for data with a large graph size (like PROTEINS) or large number of graphs (like NCI1 and NCI109).  

Second, the performance of RGE renders clear the importance of considering global properties graphs, and of having a  distance measure able to align contextually-similar but positionally-different nodes, for learning expressive representations of graphs.
In addition, as shown in Table \ref{tb:comp_graphs_node_label}, we observe that all methods (including RGE) gain performance benefits when considering the node label information or utilizing WL iterations based on node labels. With node label information, the gaps between RGE and other methods diminish but still showing very clear advantages of RGE.

Finally, as shown in Table \ref{tb:comp_DLApproaches}, for biological datasets we used the WL-RGE(ASG) to obtain the best performance with WL iteration. For social network datasets, we used the RGE(ASG) without node label since there are no node labels on these datasets. 
Compared to supervised deep learning based approaches, our unsupervised RGE method yet still shows clear advantages, highlighting the importance of aligning the structural roles of each node when comparing two graphs. In contrast, most of deep learning based methods focus on node-level representations instead of graph-level representation (typically using mean-pooling), which cannot take into account these important structural roles of each node in graphs.

\section{Conclusion and Future Work}
In this work, we have presented a new family of p.d. and scalable global graph kernels that take into account global properties of graphs. The benefits of RGE are demonstrated by its much higher graph classification accuracy compared with other graph kernels and its (quasi-)linear scalability in terms of the number of graphs and graph size. Several interesting directions for future work are indicated: i) the graph embeddings generated by our technique can be applied and generalized to other learning problems such as graph (subgraph) matching or searching; ii) extensions of the RGE kernel for graphs with continuous node attributes and edge attributes should be explored.

%
\bibliographystyle{ACM-Reference-Format}
\bibliography{RGE}

%

\clearpage
\appendix
\section{Appendix A: Proofs of Lemma \ref{lemma:cover} and Theorem \ref{thm:RF}} \label{appendix: proof of lemma and theory }

\subsection{Proof of Lemma \ref{lemma:cover}}
\begin{proof}
Since the geometric node embedding $\bu_i$ uses the normalized eigenvectors of the Laplacian matrix, we have that $\Vert\bu_i\Vert_2\leq1$, i.e., $\bu_i$ belongs to a unit ball. Therefore, we can find an $\epsilon$-covering $\E_v$ of size $(1+\frac{2}{\epsilon})^{d}$ for the unit ball. Next, we define $\E$ as all the possible sets of $\bv\in\E_v$ of size no larger than $M$. So we have $|\E|= (1+\frac{2}{\epsilon})^{dM}$. For any graph $G=(\bv_j)_{j=1}^n \in\X$, we can find $G_i\in \E$ with also $n$ nodes $(\bu_j)_{j=1}^n$ such that $\|\bu_j-\bv_j\|\leq \epsilon$. Then by the definition of EMD \eqref{EMD}, a solution that assigns each node $\bv_j$ in $G$ to a node $\bu_j$ in $x_i$ would have overall cost less than $\epsilon$, So $\EMD(G,G_i)\leq \epsilon$.
\end{proof}

\subsection{Proof of Proposition \ref{coverrelation}}
\begin{proof}
For any $G_x, G_y\in \mathcal{X}$, we can find $G_{x_k}, G_{y_k}\in \E$, such that
\begin{equation}\label{Proposition:eq1}
\mathrm{EMD}(G_x, G_{x_k})\leq \frac{t}{4\gamma}\quad \mathrm{and} \quad
\mathrm{EMD}(G_y, G_{y_k})\leq \frac{t}{4\gamma}.
\end{equation}
Write $\Delta_R(G_x,G_y)=\Delta_R(G_{x_k}, G_{y_k})+\Delta_R(G_x,G_y)-\Delta_R(G_{x_k}, G_{y_k})$, then we have
\begin{equation}\label{Proposition:eq2}
\begin{aligned}
&\vert\Delta_R(G_x,G_y)\vert\\
\leq&|\Delta_R(G_{x_k}, G_{y_k})|+|\Delta_R(G_x,G_y)-\Delta_R(G_{x_k}, G_{y_k})|\\
\leq&|\Delta_R(G_{x_k}, G_{y_k})|+|\tilde{k}_R(G_{x_k},G_{y_k})-\tilde{k}_R(G_{x},G_{y})|\\
&+|{k}_R(G_{x_k},G_{y_k})-{k}_R(G_{x},G_{y})|
\end{aligned}
\end{equation}
Now we consider the second term. 
\begin{equation}\label{proposition:eq3}
\begin{aligned}
&|\tilde{k}_R(G_{x_k},G_{y_k})-\tilde{k}_R(G_{x},G_{y})|\\
\leq&\frac{1}{R}\sum^R_{i=1}\vert\exp\big(-\gamma\mathrm{EMD}(G_{x_k},G_{\omega_i})-\gamma\mathrm{EMD}(G_{y_k},G_{\omega_i})\big)-\\
&\quad \exp\big(-\gamma\mathrm{EMD}(G_{x},G_{\omega_i})-\gamma\mathrm{EMD}(G_{y},G_{\omega_i})\big)\vert\\
\leq&\frac{1}{R}\sum^R_{i=1}\gamma\vert\mathrm{EMD}(G_{x_k},G_{\omega_i})+\mathrm{EMD}(G_{y_k},G_{\omega_i})\\
&\quad-\mathrm{EMD}(G_{x},G_{\omega_i})-\mathrm{EMD}(G_{y},G_{\omega_i})\vert\\
\leq&\frac{1}{R}\sum^R_{i=1}\gamma\vert \mathrm{EMD}(G_{x_k},G_{\omega_i})-\mathrm{EMD}(G_{x},G_{\omega_i})\vert+\\
&\frac{1}{R}\sum^R_{i=1}\gamma\vert \mathrm{EMD}(G_{y_k},G_{\omega_i})-\mathrm{EMD}(G_{y},G_{\omega_i})\vert\\
\leq&\frac{1}{R}\sum^R_{i=1}\gamma\mathrm{EMD}(G_x, G_{x_k})+\frac{1}{R}\sum^R_{i=1}\gamma\mathrm{EMD}(G_y, G_{y_k})\leq\frac{t}{2}.
\end{aligned}
\end{equation}
Similarly, we can prove that the third term in \eqref{Proposition:eq2} satisfies
\begin{equation}\label{proposition:eq4}
|{k}_R(G_{x_k},G_{y_k})-{k}_R(G_{x},G_{y})|\leq\frac{t}{2}.
\end{equation}
Combining \eqref{proposition:eq3}, \eqref{proposition:eq4}, and the assumption $|\Delta_R(G_{x_k}, G_{y_k})|\leq t$, we obtain the desired result. 
\end{proof}

\subsection{Proof of Theorem \ref{thm:RF}}
\begin{proof}
Based on Proposition \ref{coverrelation}, we have 
\begin{equation}\label{app:them:eq1}
\begin{aligned}
&P\left\{\mathrm{sup}_{G_x, G_y\in\mathcal{X}}\vert\Delta_R(G_x,G_y)\vert\leq 2t\right\}\\
\geq &P\left\{\mathrm{sup}_{G_i, G_j\in\E}\vert\Delta_R(G_x,G_y)\vert\leq t\right\}.
\end{aligned}
\end{equation}
For any $G_i, G_j\in \E$, since $E[\Delta_R(G_i,G_j)]=0$ and $|\Delta_R(G_i,G_j)|\leq 1$, from the Hoeffding inequality, we have
\begin{equation}
P\left\{ |\Delta_R(G_i,G_j)|\geq t \right\} \leq 2 \exp(-Rt^2/2).
\end{equation}
Therefore,
\begin{equation}\label{app:them:eq2}
\begin{aligned}
&P\left\{\mathrm{sup}_{G_i, G_j\in\E}\vert\Delta_R(G_i,G_j)\vert\geq t\right\}\\
\leq&\sum_{G_i, G_j\in\E}P\{\vert\Delta_R(G_i,G_j)\vert\geq t\}\\
\leq &2\vert\E\vert^2\exp(-Rt^2/2)\leq2(1+\frac{8\gamma}{t})^{2dM}\exp(-Rt^2/2).
\end{aligned}
\end{equation}
Combining \eqref{app:them:eq1} and \eqref{app:them:eq2}, and setting $t=\frac{\epsilon}{2}$, we obtain the desired result.
\end{proof}

\section{Appendix B: Additional Experimental Results}
\label{app:Additional Experimental Results}

\begin{table*}[htbp]
\centering
\small
\vspace{-4mm}
\caption{Properties of the datasets. } 
\label{tb:info of datasets}
\vspace{0mm}
\begin{center}
    \begin{tabular}{ c c c c c c c c c c}
    \hline
    Dataset & MUTAG & PTC & ENZYMES & PROTEINS & NCI1 & NCI109 & IMDB-B & IMDB-M & COLLAB\\ \hline 
    Max \# Nodes & 28 & 109 & 126 & 620 & 111 & 111 & 136 & 89 & 492 \\
    Min \# Nodes & 10 & 2 & 2 &4 & 3 & 4 & 12 & 7 & 32 \\
    Ave \# Nodes & 17.9 & 25.6 & 32.6 & 39.05 & 29.9 & 29.7 & 19.77 & 13.0 & 74.49 \\\hline
    Max \# Edges & 33 & 108 & 149 & 1049 & 119 & 119 & 1249 & 1467 & 40119\\ 
    Min \# Edges & 10 & 1 & 1 & 5 & 2 & 3 & 26 & 12 & 60 \\  
    Ave \# Edges & 19.8 & 26.0 & 62.1 & 72.81 & 32.3 & 32.1 & 96.53 & 65.93 & 2457.34  \\\hline
    \# Graph & 188 & 344 & 600 & 1113 & 4110 & 4127 & 1000 & 1500 & 5000 \\
    \# Graph Labels & 2 & 2 & 6 & 2 & 2 & 2 & 2 & 3 & 3 \\
    \# Node Labels & 7 & 19 & 3 & 3 & 37 & 38 & --- & --- & --- \\ \hline
    \end{tabular}
\end{center}
\vspace{-2mm}
\end{table*}

\textbf{General Setup.} We perform experiments to demonstrate the effectiveness and efficiency of the proposed method, and compare against total 12 graph kernels and deep graph neural networks on 9 benchmark datasets (as shown in Table \ref{tb:info of datasets}) \footnote{http://members.cbio.mines-paristech.fr/~nshervashidze/code/} that is widely used for testing the performance of graph kernels. We implement our method in Matlab and utilize C-MEX function \footnote{http://ai.stanford.edu/~rubner/emd/default.htm} for the computationally expensive component of EMD. 
To accelerate the computation, we use multithreading with total 12 threads in all experiments. 
All computations were carried out on a DELL dual socket system with Intel Xeon processors 272 at 2.93GHz for a total of 16 cores and 250 GB of memory, running the SUSE Linux operating system.

\subsection{Additional Results and Discussions on Accuracy and Runtime of RGE Varying $R$}
\label{app:Additional Results and Discussions on Accuracy and Runtime of RGE Varying $R$}

\textbf{Setup.} We now conduct experiments to investigate the behavior of three variants of RGE with or without using node labels by varying the number $R$ of random graphs. 
The hyperparameter $D$ is obtained from the previous cross-validations on the training set. 
Depending on the size of graph on each dataset, we set $R$ in the range starting from 4 and ending with a number $R$ just satisfying $R = 2^k > n$. 
We report both testing accuracy and runtime when increasing graph embedding size $R$. 

\subsection{Additional Results and Discussions on Scalability of RGE varying $N$ graphs and $n$ nodes}
\label{app:Additional Results and Discussions on Scalability of RGE varying $N$ graphs and $n$ nodes}

\textbf{Setup.} We assess the scalability of RGE when varying number of graphs $N$ and the size of a graph $n$ on randomly generated graphs. We change the number of graphs in the range of $N = [8 \ 16384]$ and the size of graph in the range of $n = [8 \ 1024]$, respectively. When generating random adjacency matrices, we set the number of edges always be twice the number of nodes in a graph. We use the size of node embedding $d = 6$ just like in the previous sections. We set the hyperparameters related to RGE itself are $DMax = 10$ and $R = 128$. We report the runtime for computing node embeddings using state-of-the-art eigensolver \cite{stathopoulos2010primme,wu2017primme_svds} and RGE graph embeddings, and the overall runtime, respectively. 


\subsection{Additional Results and Discussions on Comparisons Against All Baselines}
\label{app:Additional Results and Discussions on Comparisons Against All Baselines}

\textbf{Setup.} Since RGE is a graph embedding, we directly employ a linear SVM implemented in LIBLIBNEAR \cite{fan2008liblinear} since it can faithfully examine the effectiveness of our feature representation from the power of the nonlinear learning solvers. Following the convention in the graph kernel literature, we perform 10-fold cross-validation, using 9 folds for training and 1 for testing, and repeat the whole experiments ten times (thus 100 runs per dataset) and report the average prediction accuracies and standard deviations. The ranges of hyperparameters $\gamma$ and $D\_max$ are [1e-3 1e-2 1e-1 1 10] and [3:3:30], respectively. All parameters of the SVM and hyperparameters of our method were optimized only on the training dataset. 
The node embedding size is set to either 4, 6 or 8 but always be the same number for all variants of RGE on the same datasets. 
To eliminate the random effects, we repeat the whole experiments ten times and report the average prediction accuracies and standard deviations. 
For all baselines we take the best number reported in the papers except EMD, where we rerun the experiments for fair comparisons in terms of both accuracy and runtime. Since GRE, OA, EMD, and PM are essentially built on the same node embeddings from the adjacency matrices, we take the number of OA-{$E_\lambda$}(A) in \cite{johansson2015learning} for a fair comparison.

\end{document}